\title{A Black Swan Hypothesis: The Role of Human Irrationality in AI Safety}
\author{\textbf{\quad\quad \quad \quad \quad \quad \quad \quad Hyunin Lee$^{1}$\thanks{Corresponding author: hyunin@berkeley.edu} \quad Chanwoo Park$^{2}$ \quad David Abel$^{3}$ \quad Ming Jin$^{4}$ } \\
\quad\quad\quad\quad\quad\quad\quad\quad\quad\quad
$^{1}$UC Berkeley, $^{2}$MIT, $^{3}$Google DeepMind,
$^{4}$Virgina Tech
}
    \newcommand{\cp}[1]{{\color{red}\scriptsize[Chanwoo: #1]}}
    \newcommand{\hl}[1]{{\color{blue}\scriptsize[Hyunin: #1]}}
    \newcommand{\cp}[1]{{}}
    \newcommand{\hl}[1]{{}}
\newcommand{\ours}{\textsc{s-black swan }}
\newcommand{\ourss}{\textsc{s-black swans }}
\begin{document}

\maketitle

\begin{abstract}
\textit{Black swan} events are statistically rare occurrences that carry extremely high risks. A standard view of black swans assumes that they originate from an unpredictable and changing environment; however, the community lacks a comprehensive definition of black swan events. To this end, this paper challenges that the standard view is \emph{incomplete} and claims that high-risk, statistically rare events can also occur in unchanging environments due to human misperception of events' values and likelihoods, which we refer to as \ours. We first carefully categorize black swan events, focusing on \ours, and mathematically formalize the definition of black swan events. We hope these definitions can pave the way for the development of algorithms to prevent such events by rationally correcting limitations in perception.
\end{abstract}
\section{Introduction}
% \setlength{\epigraphwidth}{0.55\textwidth}
% \epigraph{\hfill Risk is what’s left when you think you’ve thought of everything.}{\textit{Carl Richards, The Behavior Gap}}

To successfully deploy machine learning (ML) systems in open-ended environments, these systems must exhibit robustness against \emph{rare and high-risk events}, often referred to as \emph{black swans} \citep{taleb2010black}. Achieving this robustness requires a deep and precise understanding of the origins of such events, which has been increasingly recognized as a critical factor for enabling ML algorithms to attain full control and make optimal decisions \citep{chollet2019measure,silva2020opportunities,he2021automl,li2023trustworthy,yang2024generalized}. Nevertheless, many contemporary ML systems remain vulnerable to black swans in real-world scenarios, as evidenced by automated trading systems that overreact to market anomalies \citep{kirilenko2017flash,gamestop_2021,ft_citadel_2022}, unexpected bankruptcies \citep{wiggins2014lehman, akhtaruzzaman2023did}, the Covid pandemic \citep{antipova2020coronavirus}, and autonomous vehicles encountering unforeseen road or weather conditions \citep{tesla2021,witman2023southwest,Nordhodd23a}. 
\cp{still are these examples are more from stationary, static environment?I think so, but just want to double check. } \cp{then still, we need to provide some 떡밥( evidence) that stationary environment is important}
% \cp{do we claim these events are \textit{blackswan}? especially stationary environment?} \hl{these events are not cleary mention as black swans, but I am pretty confident that those are perceived as ``rare and high-risk'' events as a general reader -- the gap happens since the reader just does not heard a term; black swan beforehand. The reason that I have listed those events is to support that current AI algorithms is not enough to handle ``rare and high-risk events'' by highlighting that those are still occurring.} \cp{then why don't  our pape use rare and high risk terminology? We need to distinguish two terminology} \hl{I didn't get your point..so ``rare and high risk events'' = ''black swan'' - those two terminologies should be used simultaneous to explain black swans.}

In this paper, we argue that ML systems remain susceptible to black swan events, regardless of an algorithm's representation capacity or scalability, due to an AI community's \emph{incomplete} understanding of the origins of these events. The prevailing belief in most algorithmic approaches to preventing black swan events \citep{prestwich2019tuning,artemenko2020self,devarajan2021healthcare,wabartha2021handling,bhanja2024black,jin2024preparing} is that such events primarily arise from \emph{dynamic, time-varying} environments. We contend, however, that black swans can also emerge from \emph{static, stationary} environments. To this end, we propose a new hypothesis on their origins: 
\begin{tcolorbox}[colback=gray!10]
\begin{hypothesis}
    \emph{Black swans can originate from misperceptions of an event's reward and likelihood, even within static environments.}
\end{hypothesis}
\label{hyp1}
\end{tcolorbox}
\cp{I like this paragraph :) }\cp{how about combining first and second paragraph? I guess the first paragraph has no information for our paper. just mention (and refer) several blackswan events in the second paragraph, and delete the first paragraph. Or even we can simply delete first paragraph, as we have the third paragraph} \hl{this is good comments.. I have thought the the role of 1st paragraph as ``highlighting that preparing black swans by improving algorithms is fundamentally not enough''. It seems that message naturally leads us to focus on event definition, not on the algorithmic side. } \cp{but the first paragraph acrtually say the necessity of robust alg, not about improving algorithm.} \hl{Ok. I got the point. I think this is where you and I have different thoughts.. I think 1st paragraph is necessary to smoothly explain what is black swans - for the purpose of making readers familiar with what is black swans. Just deleting the 1st paragraph and start with 2nd paragraph might gives objection from the reader -- that is why the author claims the origin understanding is incomplete?. Within this sense, the role of 1st paragraph is to persuade readers that ``the current AI algorithms cannot still solve black swans.''. 2nd paragraph then comes in after a gentle introduction of black swans in 1st paragraph to persuade readers that ``that is because we don't have enough understanding on origin.'' } \cp{I guess the first paragraph's comment that I just added might address both of concern.}

To warmly introduce our new hypothesis, consider the bankruptcy of Lehman Brothers, widely recognized as the most significant black swan event in the financial industry \citep{wiggins2014lehman}. A strong explanation points to the investors making rational decisions on the false market perception which appeared rational at the time but proved irrational by correcting their perception in hindsight \cp{this sentence is too hard to read, and also we do not have strong evidence on this sentence. I might delete this sentence.}. The firm declared bankruptcy within $72$ hours without any precursor \citep{mcdonald2009colossal}, and the only factor that changed during those three days was investors' perception of the company \citep{morgan2023same,mawutor2014failure,fleming2014failure} \footnote{The bank's loss endurance, evaluated at 11.7\% by the U.S. government, stayed \emph{stationary} over the 72 hours.}. Investors made optimal decisions based on this perception, which turned out to be suboptimal once the perception was revealed to be false during those 72 hours (See Appendix \ref{Supporting Evidence} for additional supporting evidence of our hypothesis.). \cp{is Lehman Brother event due to statioanry env with perception error? or instationary env?} \hl{No one knows - but a explanation that I have mentioned in this paragraph seems perception error in stationary environment. The world is fixed. The market is fixed. But how the investor have viewed the market is changed. Perception have changed but the world did not. I think I have to mention this clearly in the example.. }\cp{I am asking what is your opinion. As far as we are mentioning this event, we must have a good explanation for the mentioned event which can be explained in our framework. We need some belief and corresponding logical reason.} \hl{um. Really good points.. revised based on your explanation.}\cp{If we change the order of paragraph as 1-3-2, then everything might be good (assuming that we have enough explanation of ``stationary'' environment. }

\cp{Also, we might need to point out that in the theoretical standpoint, temporal does not have a lot of information, so we think this is the first step to analyze blackswan event. after analyzing on the stationary events, we can move on to the temporal event. That might be our punchline and selling point. Make question easy and simple for theoretical analysis.}

\textbf{Contribution. } We refer to black swan events in stationary environments as \textbf{\ours} and define them in the context of a Markov Decision Process (MDP) as follows: 
\begin{tcolorbox}[colback=gray!10]
\textbf{(Informal)} \emph{An \ours event is a state-action pair where humans misperceive both its likelihood and reward. It is perceived as impossible, despite occurring with small probability, while its reward is overestimated relative to its true value in a stationary environment.}
\end{tcolorbox}

Our work begins with a case study on how \ours emerge and cause suboptimality gaps in various MDP settings, such as bandit (Theorem \ref{thm:one step perceived optimal policy}), small state spaces (Theorem \ref{thm:Multi step perceived optimal policy}), and large state spaces (Theorem \ref{thm:Two step optimal decision}). We introduced three MDPs to define \ours: the ground truth MDP (GMDP), the Human MDP (HMDP), and the Human-Estimation MDP (HEMDP). The GMDP represents the real world, while the HMDP reflects humans' biased perceptions (Definitions \ref{def:Value Distortion Function} and \ref{def:Probability Distortion Function}). \ours (Definitions \ref{def:blackswan-discrete} and \ref{def:blackswan}) are state-action pairs perceived as impossible in the HMDP but occur with small probability and higher rewards in the GMDP. Our main finding (Theorem \ref{thm1}) shows that while the HEMDP value function asymptotically converges to that of the HMDP over longer horizons, the gap between HMDP and GMDP has a lower bound, influenced by reward distortion, the size of the \ours set, and their minimum probability of occurrence. Finally, Theorem \ref{thm2} examines \ours hitting time, showing that larger reward distortion and higher \ours probability necessitate more frequent updates to human perception functions.

% An \ours can be understood through the interaction of three different MDPs: the ground truth MDP (GMDP), the Human MDP (HMDP), and the Human-Estimation MDP (HEMDP). 
% The GMDP represents the actual real world, while the HMDP reflects how humans perceive this real world, influenced by their internal biases. Humans use their internally distorted perceptions to estimate rewards and the likelihood of certain actions in the HMDP. They then use these estimates to form the HEMDP and then learn an optimal policy. An \ours is a situation where a specific state-action pair in the HMDP is perceived as impossible (having zero probability), but in the real world (GMDP), it has a small, non-zero probability and offers higher rewards, that is the rewards in the HMDP are significantly lower than those in the GMDP. This situation occurs when there is no estimation error between the HMDP and HEMDP, meaning the agent is making the best possible decisions based on its perceptions, but there is a significant perception error between the HMDP and GMDP. In essence, the agent's decisions are optimal, but based on false perceptions. Reducing this perception error, which requires correcting the human's distorted perceptions, could lead to the development of better algorithms that align more closely with reality.

\section{Preliminary}
\label{sec:Preliminary}
\paragraph{Notations.}The sets of natural, real, nonnegative, and nonpositive real numbers are denoted by \(\mathbb{N}\), \(\mathbb{R}\), \(\mathbb{R}_{\geq 0}\), and \(\mathbb{R}_{\leq 0}\) respectively. For a finite set \(Z\), the notation \(|Z|\) represents its cardinality, and \(\Delta(Z)\) denotes the probability simplex on \(Z\). Given \(X, Y \in \mathbb{N}\) with \(X < Y\), we define \([X] := \{1, 2, \ldots, X\}\), the closed interval \([X, Y] := \{X, X+1, \ldots, Y\}\).  For \(x \in \mathbb{R}_{\geq 0}\), the floor function \(\lfloor x \rfloor\) is defined as \(\max \{ n \in \mathbb{N} \cup \{0\} \mid n \leq x \}\)\footnote{For clarity and readability, all notations used throughout the entire paper are elaborated in Appendix \ref{sec:Notations}}.

\paragraph{Markov Decision Process.}We consider a finite-horizon MDP denoted as $\mathcal{M} = \langle \mathcal{S}, \mathcal{A}, P, R, \gamma, T \rangle$, where $P = \{ P_t \}_{t=0}^T$ and $R = \{ R_t \}_{t=0}^T$ for $t \in \mathbb{N}$. Here, $\mathcal{S}$ represents the state space, $\mathcal{A}$ denotes the action space, $P_t: \mathcal{S} \times \mathcal{A} \to \Delta(\mathcal{S})$ is the transition probability function at time $t$, $R_t: \mathcal{S} \times \mathcal{A} \to \mathbb{R}$ is the reward function at time $t$, $\gamma$ is the discount factor, and $T \in \N$ is the horizon length. We define $\mathcal{M}$ as a stationary MDP if $P_t(s^\prime \mid s, a) = P_{t+1}(s^\prime \mid s, a)$ and $R_t(s, a) = R_{t+1}(s, a)$ for all $(s^\prime, s, a) \in \mathcal{S} \times \mathcal{S} \times \mathcal{A}$ and for all $t \in [T-1]$. Otherwise, we define $\mathcal{M}$ as a non-stationary MDP. In the stationary case, we denote $P$ and $R$ as the single transition probability function and reward function, respectively. A policy is denoted as $\pi \in \Pi$, where $\Pi: \mathcal{S} \to \Delta(\mathcal{A})$ is the set of policies. We denote a $T$-length trajectory from $\mathcal{M}$ under policy $\pi$ as $\{s_0, a_0, r_0, s_1, a_1, r_1, \dots,s_{T-1},a_{T-1},s_T\}$, where $s_t \sim P_t(\cdot \mid s_{t-1}, a_{t-1})$ and $r_t = R_t(s_t, a_t)$.  Assume that all rewards are bounded, i.e., $r_t \in [-R_{\max}, R_{\max}]$ for all $t$. The agent's goal is to compute the optimal policy $\pi^\star \in \Pi$ that maximizes the value function:
$V^{\pi}_{\mathcal{M}}(s) := \mathbb{E}_{\pi} [ \sum_{t=0}^{T} \gamma^t R_t(s_t, a_t) \,|\, P, s_0 = s ]$. We further define the normalized visitation probability as \( P^\pi(s,a) := \frac{1-\gamma^T} {1-\gamma} \sum_{t=0}^{T-1} \gamma^t \P((s_t,a_t)=(s,a) | s_0,\pi,P) \), where \( \P(s,a | s_0,\pi,P) \) is the probability of visiting $(s,a)$ at time $t$ under policy $\pi$ and transition probability $P$ starting from $s_0$ \cp{this should be moved to the preliminary section}.
\cp{IF we finally decided to introduce temporal blackswan, then keep it, but if not, we need to delete. } \hl{Thanks for keep raising this issue. I understand that this is reasonable issue since providing non-statinoarity in this paper seems deviation from main idea, that is BS in ``stationary env''. I think keeping notations to express non-stationarity is sort of necessary. The main reason is that we are defining BS in stationary environment. Then, natural question is what would be the defintion of other sides? - We are not dealing with BS in non-stationary, but I believe we have to show at least provide whether the definiotn of BS in stationary env is "complete" by providing its other side definition (=Bs in non-stationary).}\cp{then default notation should be not considering $t$}

The following three theorems, drawn from existing work, lay the groundwork for mathematically formulating \emph{misperception} of the Hypothesis \ref{hyp1}.

\paragraph{Expected Utility Theory.}Given an outcome space \(\cO = \{o_1, \dots, o_K\}\), we define a utility function \(g:\cO \to \R\) that quantifies the gain or loss associated with each outcome \(o_i\). An individual agent is faced with choices, where each choice represents a scenario in which the outcomes \(o_i\) occur with given probabilities \(p_i\), summing to one. The set of all choices is denoted by \(\cC\). Each choice \(c \in \cC\) returns \(\cO\) with a probability distribution \(\boldsymbol{p}_c = (p^{(c)}_1,\dots,p^{(c)}_K)\). Under a given choice \(c\), \emph{Expected Utility Theory (EUT)} evaluates the riskiness of that choice as \(V(c) = \sum_{i=1}^{K} g(o_i) p^{(c)}_i\) \citep{neummann1944theory,rabin2013risk}. To illustrate, consider a stock market investment scenario where \(\cO = \{\text{Economic Boom (EB)}, \text{Economic Recession (ER)}\}\). Here, \(g(EB)\) represents a gain, while \(g(ER)\) represents a loss. The set of choices \(\cC = \{\text{invest in stocks},\text{invest in bonds},\text{keep cash}\}\) corresponds to different probability distributions \(\boldsymbol{p}_{c} = (p^{(c)}_1, p^{(c)}_2)\) of outcomes.

\paragraph{Prospect Theory.}However, \emph{Expected Utility Theory (EUT)} fails to account for empirical observations from psychological experiments \citep{drakopoulos2016workers,pandit2019impact,wahlberg2000risk,vasterman2005role,van2022news} and economic cases \citep{rogers1998cognitive,wheeler2007review,BetterUpAvailabilityHeuristic} that demonstrate human irrationality. Specifically, humans tend to exhibit internal distortions when perceiving event probabilities $\boldsymbol{p}_c$ and evaluating outcome values $g(\cO)$ for any choice $c$ \citep{opaluch1989rational}. To address these discrepancies, \emph{Prospect Theory (PT)} introduces a probability distortion function $w : [0,1] \to [0,1]$ and a value distortion function $u: \mathbb{R} \to \mathbb{R}$, which modify the expected utility calculation to $V(c) = \sum_{i=1}^{K} u(g(o_i)) w(p^{(c)}_i)$ \citep{kahneman2013prospect,fennema1997original}. The motivation for introducing \emph{PT} is not only to acknowledge human irrationality but also to provide a more accurate mathematical framework for how people actually perceive probabilities and outcomes. \emph{PT} describes the characteristics of the functions $u$ and $w$ based on empirical case studies. The function $u$ represents \emph{value distortion}, capturing how individuals assess gains and losses ($x$-axis of Figure \ref{fig:Utility function} represents the true value, and the $y$-axis represents the perceived value). The function $w$ represents \emph{probability distortion}, reflecting how individuals tend to overestimate the likelihood of rare events and underestimate the likelihood of more probable events. ($x$-axis of Figure \ref{fig:Weight function} represents the true probability, and the $y$-axis represents the perceived probability.)

\paragraph{Cumulative Prospect Theory.}To enhance mathematical rigor—specifically, to ensure that distorted probabilities still sum to one—\emph{Prospect Theory (PT)} was further revised into \emph{Cumulative Prospect Theory (CPT)}. In \emph{CPT}, the expected value is defined as \(V(c) = \sum_{i=1}^{K} u(g(o_i)) \left( w\left(\sum_{j=1}^{i} p^{(c)}_j\right) - w\left(\sum_{j=1}^{i-1} p^{(c)}_j\right) \right)\), where the function \(w\) distorts the cumulative probability of an event \(o_i\). The following insurance example illustrates \emph{CPT} in action.

\begin{example}[Insurance policies]
Consider an example where the probability of an insured risk is $1$\%, the potential loss is $1,000$, and the insurance premium is $15$. According to CPT, most would opt to pay the $15$ premium to avoid the larger loss.
\label{ex:Insurance policies}
\end{example}
Example \ref{ex:Insurance policies} shows how a simple decision can be modeled as a two-step Markov Decision Process with states \(\cS= \{ s_{base}, s_{premium}, s_{risk} \}\) representing utility value of $0$, $-15$, and $-1000$, and actions (or choice set $\cC$) \(\cA= \{ a_{p}, a_{np} \}\) for paying or not paying the premium. At \(t = 0\), humans choose between \(a_{p}\) (leading to $s_{premium}$) and \(a_{np}\), which could result in $s_{base}$ with 99\% probability or $s_{risk}$ with 1\% probability. Expected utility theory suggests \(a_{np}\) is optimal since its expected value ($V(a_{np})=-1000 \cdot 0.01 =-10$) is lower than that of \(a_{p}\) ($V(a_p)=-15 \cdot 1=-15$), but real-world decisions often favor \(a_{p}\), highlighting a divergence from theoretical rationality.\cp{can we not use MDP here? I think CPT + MDP is already very new thing (ch 3), which says that we need to explain in ch 3. (so in ch 3, before starting sec 3.1, we can add this example. Here, instead of this example, we might provide original CPT examples (or even without formulation would be fine. So formulation "with" original CPT notation / or just explain heuristically in here. }\hl{I think it does not matter. This is just an example. I wish this example works as a warm motivation for CPT + MDP in the next section.}\cp{not sure about that. Then people might think like \textit{``CPT + MDP formulation already exists, why we need to consider this problem? ''}}

Therefore, we begin by formalizing the key empirical observations from \emph{CPT} into the following definitions.

% \begin{definition}[\textbf{Value Distortion Function}]
% Let the value distortion function $u$ be defined as:
% $$
% u(x) = 
% \begin{cases} 
% u^+(x) & \text{if } x \geq 0, \\
% x^-(x) & \text{if } x < 0,
% \end{cases}
% $$
% where $u^+ : \mathbb{R}_{\geq 0} \to \mathbb{R}_{\geq 0}$ and $u^- : \mathbb{R}_{\leq 0} \to \mathbb{R}_{\leq 0}$ are Lipschitz continuous functions. We assume the following properties:
% \begin{itemize}
%     \item $u^+$ is a non-decreasing, concave function that satisfies $\lim_{h \to 0^+} (u^+)'(h) \leq 1$.
%     \item $u^-$ is a non-decreasing, convex function that satisfies $\lim_{h \to 0^-} (u^-)'(h) > 1$.
% \end{itemize}
% \end{definition}

% \begin{definition}[\textbf{Probability Distortion Function}]
% Let probability distortion function $w$ be defined as:
% $$
% w(p_i) = 
% \begin{cases} 
% w^+(p_i) & \text{if } g(x_i) \geq 0, \\
% w^-(p_i) & \text{if } g(x_i) < 0,
% \end{cases}
% $$
% where $w^+, w^- : [0,1] \to [0,1]$ are Lipschitz continuous functions. We assume the following properties:
% \begin{itemize}
%     \item $w^+(0) = 0$, $w^+(1) = 1$, $w^-(0) = 0$, and $w^-(1) = 1$.
%     \item There exist $a, b \in (0,1)$ such that $w^+(a) = a$ and $w^-(b) = b$.
%     \item The derivative $(w^+)'(x)$ is monotonically decreasing on $x \in [0, a)$ and monotonically increasing on $x \in (a, 1]$.
%     \item The derivative $(w^-)'(x)$ is monotonically increasing on $x \in [0, b)$ and monotonically decreasing on $x \in (b, 1]$.
% \end{itemize}
% \end{definition}
\begin{definition}[Value Distortion Function]
The value distortion function $u$ is defined as:
$$
u(x) = 
\begin{cases} 
u^+(x) & \text{if } x \geq 0, \\
u^-(x) & \text{if } x < 0,
\end{cases}
$$
where $u^+ : \mathbb{R}_{\geq 0} \to \mathbb{R}_{\geq 0}$ is non-decreasing, concave with $\lim_{h \to 0^+} (u^+)'(h) \leq 1$, and $u^- : \mathbb{R}_{\leq 0} \to \mathbb{R}_{\leq 0}$ is non-decreasing, convex with $\lim_{h \to 0^-} (u^-)'(h) > 1$.
\label{def:Value Distortion Function}
\end{definition}

\begin{definition}[Probability Distortion Function]
The probability distortion function $w$ is defined as:
$$
w(p_i) = 
\begin{cases} 
w^+(p_i) & \text{if } g(x_i) \geq 0, \\
w^-(p_i) & \text{if } g(x_i) < 0,
\end{cases}
$$
where $w^+, w^- : [0,1] \to [0,1]$ satisfy: $w^+(0) = w^-(0) = 0$, $w^+(1) = w^-(1) = 1$; $w^+(a) = a$ and $w^-(b) = b$ for some $a, b \in (0,1)$; $(w^+)'(x)$ is decreasing on $[0, a)$ and increasing on $(a, 1]$; $(w^-)'(x)$ is increasing on $[0, b)$ and decreasing on $(b, 1]$.
\label{def:Probability Distortion Function}
\end{definition}

The derivative constraints encapsulate the core observations of \emph{CPT}. Specifically, the conditions on $(u^-)^\prime$ and $(u^+)^\prime$ in Definition \ref{def:Value Distortion Function} formalize the tendency for individuals to value losses more heavily than equivalent gains (see Figure \ref{fig:Utility function}). The constraints on $(w^-)^\prime$ and $(w^+)^\prime$ in Definition \ref{def:Probability Distortion Function} describe the tendency to overweight (or underweight) the probabilities of rare events and underweight (or overweight) those of average events where the outcome results in a gain (or a loss) (see Figure \ref{fig:Weight function}).

\begin{figure}
     \centering
     \begin{subfigure}[b]{0.22\textwidth}
         \centering
         \includegraphics[width=\textwidth]{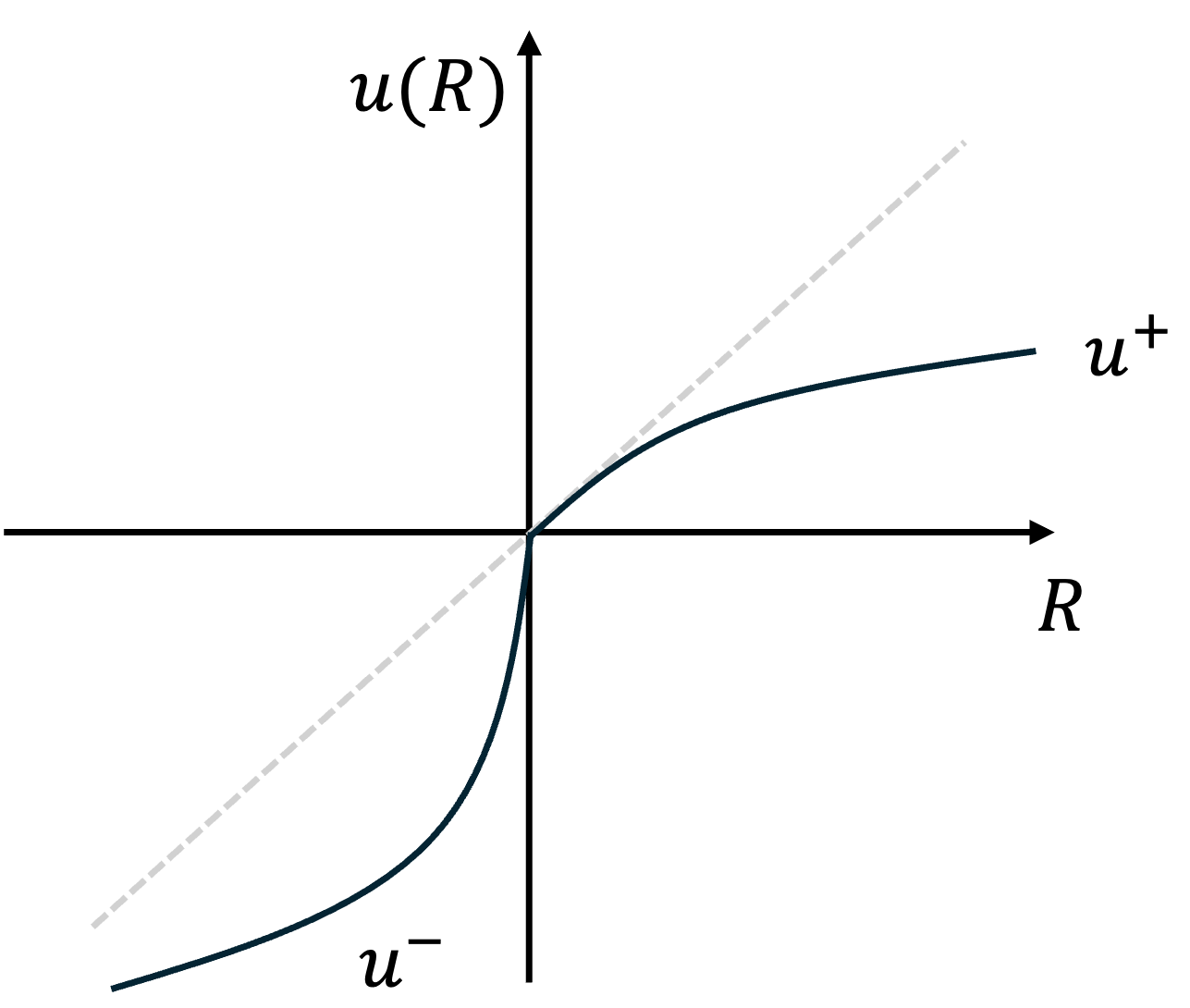}
         \caption{Value distortion}
         \label{fig:Utility function}
     \end{subfigure}
     \begin{subfigure}[b]{0.22\textwidth}
         \centering
         \includegraphics[width=\textwidth]{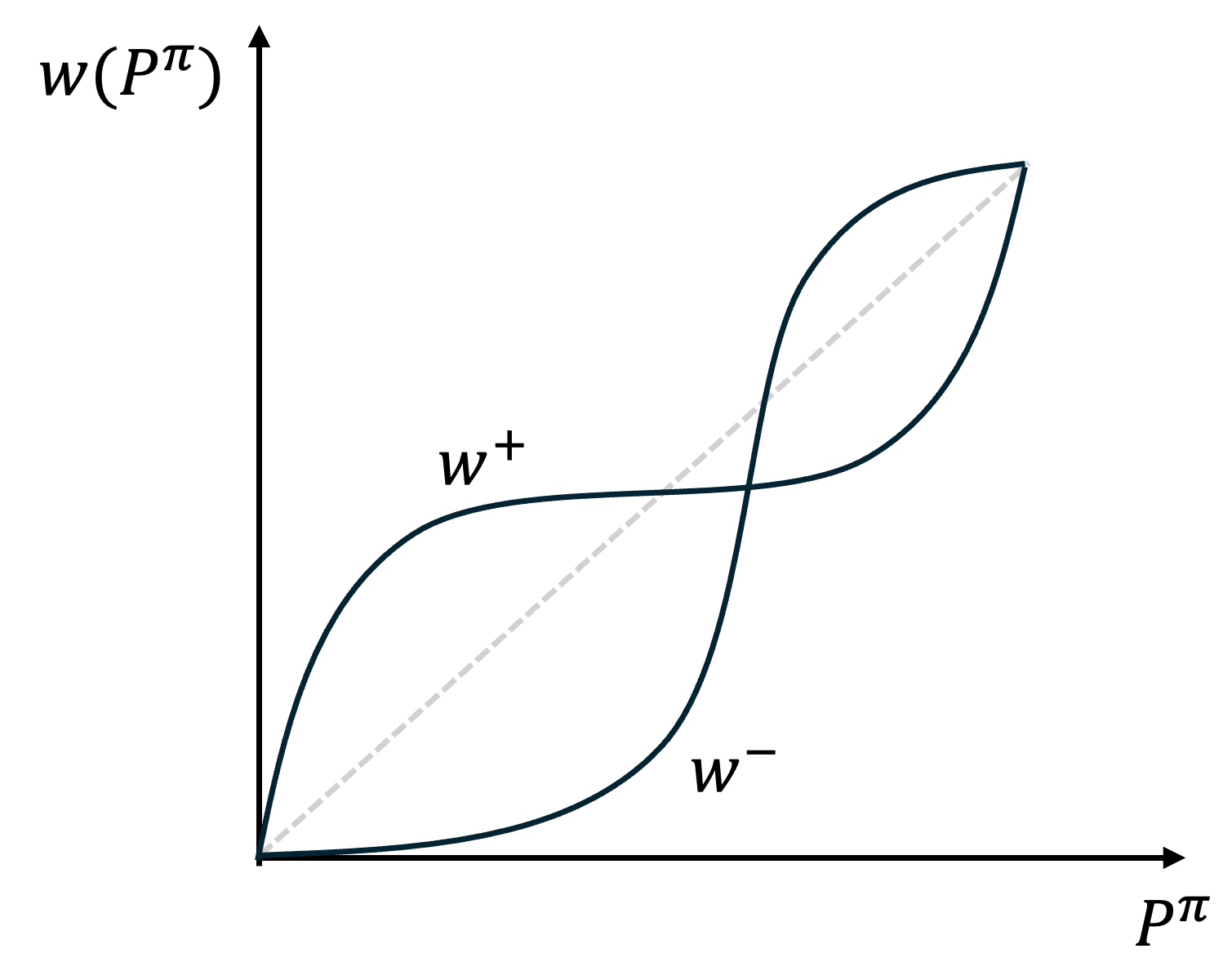}
         \caption{Probability distortion}
         \label{fig:Weight function}
     \end{subfigure}
     \begin{subfigure}[b]{0.22\textwidth}
         \centering
         \includegraphics[width=\textwidth]{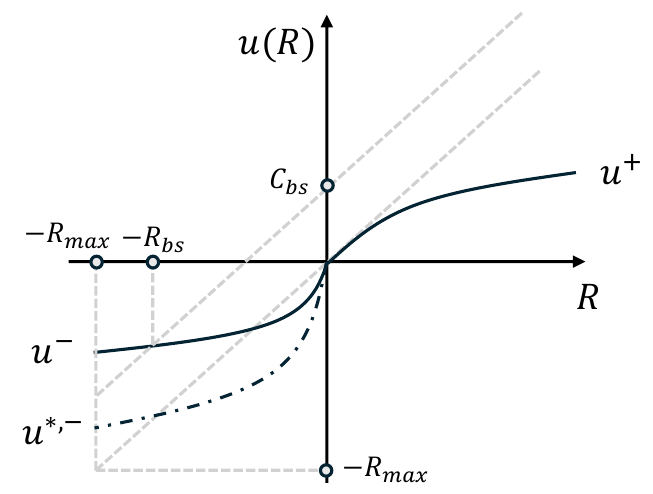}
         \caption{$u$ with black swans}
         \label{fig:Utility function2}
     \end{subfigure}
     \begin{subfigure}[b]{0.22\textwidth}
         \centering
         \includegraphics[width=\textwidth]{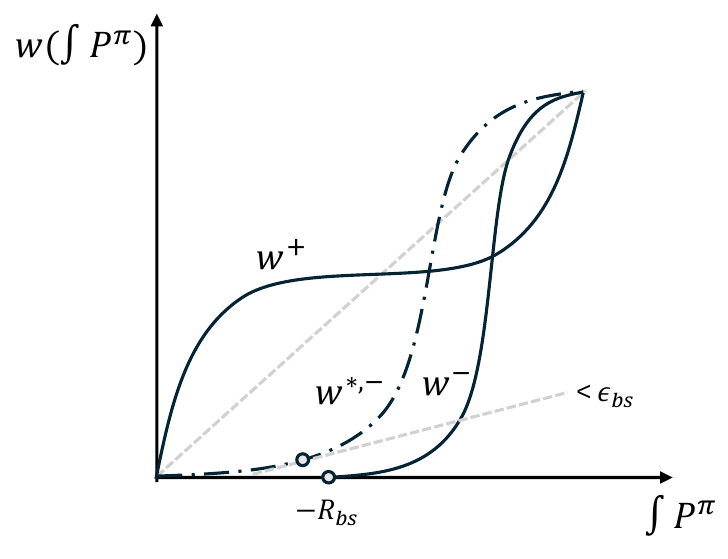}
         \caption{$w$ with black swans}
         \label{fig:Weight function2}
     \end{subfigure}
        \caption{Value distortion function $u$ and probability distortion function $w$. The gray line in Figures \ref{fig:Utility function} and \ref{fig:Weight function} represents \(y = x\).}
        \label{fig:Utility and weight functions}
\end{figure}

\section{Black Swan in stationary and non-stationary environments}
\label{sec:Black Swan in stationary and non-stationary environments}
\cp{why we need to have blackswan classification after introducing why "Sequential" decision making is needed? I am not sure about it. }
Hypothesis \ref{hyp1} concerns the feasibility of black swan events existing in stationary environments. We next illustrate how black swans can originate from both stationary and non-stationary environments. We begin by defining the black swan event dimension as follows.

\begin{definition}[Black Swan Event Dimension]
    For a given MDP $\mathcal{M}$, we define the dimension of a black swan event as the set $\mathcal{S} \times \mathcal{A} \times [T]$.
    \label{def:blackswandimension}
\end{definition}

\cp{we did not defined black swan event dimension. } \cp{supported by blah blah is not definition of balck swan dimension. } \cp{I actually didn ot get what is the meaning of dimension here. }
Then, we informally refer to $(s, a, t_{bs}) \in \mathcal{S} \times \mathcal{A} \times [T]$ as a black swan event if it represents a rare, high-risk occurrence that significantly deviates from expected outcomes based on prior experience in the real world $\mathcal{M}$. This could involve an unexpected transition or an anomalous reward signal. We then introduce a classification rule that distinguishes black swan events based on whether they occur in non-stationary environments or arise within stationary environments, as follows. \cp{why we need classification rule? }

\begin{algorithm}[Black Swan Classification: \ours]
    For a given (possibly non-stationary) $\cM$, suppose $(s,a,t_{bs})$ is a black swan event. If $(s,a,t)$ is a black swan event for $\forall t \in [T] $, then we classify $(s,a,t_{bs})$ as a black swan that originates from environment's stationarity (\textbf{\ours}). 
    % Otherwise, that is if there exists $t \in [T]$ such that $(s,a,t)$ is not a black swan event, then we classify $(s,a,t_{bs})$ in non-stationary environments (\textsc{ns-black swan}).
    \label{algo1}
\end{algorithm}
\cp{I would check this later, but I did not get this proposition now. And one more comment is that - if we provide the classification, it might not be a proposition, but more about the definition. I am not sure! }\cp{I guess it might be better after I myself understand the mathematical definition of BS evnet.}

Based on Algorithm \ref{algo1}, one can always identify a unit time interval that classifies any black swan event as an \ours, as stated in the following proposition.

\begin{proposition}
    If \((s, a, t_{bs})\) is a black swan event, then there exists a time interval \([t_1, t_2] \subseteq [T]\) such that for every \(t \in [t_1, t_2]\), the \((s, a, t)\) is classified as \ours.
    \label{prop:blackswanclassification}
\end{proposition}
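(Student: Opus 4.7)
The statement is really an existence claim, so the natural plan is to exhibit an explicit interval $[t_1,t_2]\subseteq [T]$ that does the job. The cleanest candidate is the degenerate interval $[t_1,t_2]=[t_{bs},t_{bs}]$, i.e.\ a single time step. The whole point of Algorithm \ref{algo1} is that classifying $(s,a,t_{bs})$ as \ours requires $(s,a,t)$ to be a black swan for every $t$ in the relevant horizon; restricting attention to the singleton horizon $\{t_{bs}\}$ collapses this quantifier to the hypothesis itself.

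In more detail, the plan is as follows. First, I would formally restrict the MDP $\cM$ to the sub-horizon $[t_1,t_2]$ by truncating $P=\{P_t\}$ and $R=\{R_t\}$ to indices in $[t_1,t_2]$; call this restricted MDP $\cM_{[t_1,t_2]}$. Second, I would invoke the hypothesis that $(s,a,t_{bs})$ is a black swan event in $\cM$ and observe that, because the notion of a black swan event is a pointwise condition at time $t_{bs}$ (a rare, high-risk, anomalously rewarded state--action pair), it is preserved under truncation: $(s,a,t_{bs})$ remains a black swan event in $\cM_{[t_{bs},t_{bs}]}$. Third, I would apply Algorithm \ref{algo1} to $\cM_{[t_{bs},t_{bs}]}$: the universal quantifier ``$\forall t \in [T]$'' now ranges only over $\{t_{bs}\}$, which is satisfied by hypothesis, so $(s,a,t_{bs})$ is classified as \ours. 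Taking $[t_1,t_2]=[t_{bs},t_{bs}]$ then yields the desired interval.

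The main obstacle is conceptual rather than technical: one must justify that Algorithm \ref{algo1} can legitimately be applied to a sub-horizon of the original MDP, as opposed to being tied rigidly to the full interval $[T]$. I would therefore spend a brief remark arguing that the classification rule is really a statement about a stationary MDP --- the $\forall t$ condition captures time-invariance of the event --- and that stationarity on a sub-interval is a weaker and perfectly well-defined property. Once that interpretive point is granted, the proof is essentially a tautology: the singleton interval trivially carries stationarity, and any black swan is automatically \ours with respect to it. If a non-trivial interval is desired, one could further enlarge $[t_1,t_2]$ maximally to the connected component of times around $t_{bs}$ on which $(s,a,\cdot)$ remains a black swan, but this is not needed for the existence statement as written.
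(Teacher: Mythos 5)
Your proposal is correct and matches what the paper does: the paper gives no formal proof of Proposition \ref{prop:blackswanclassification}, justifying it only through Example \ref{ex:blackswanclassification}, whose Case 1 is exactly your degenerate choice $t_1=t_2=t_{bs}$ (with Cases 2 and 3 being your ``enlarge the interval where stationarity holds'' remark). Your explicit note that Algorithm \ref{algo1} must be read relative to the sub-horizon $[t_1,t_2]$ rather than all of $[T]$ is the same interpretive move the paper makes implicitly, so the argument is essentially identical, just stated more carefully.
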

We provide an intuitive interpretation of Proposition \ref{prop:blackswanclassification} through the following example.

\begin{example}
    Suppose $(s,a,t_{bs})$ is a black swan event.
    \begin{enumerate}[leftmargin=2cm]
       \item [Case 1.] Consider $\mathcal{M}$ as a non-stationary MDP where $P_t$ and $R_t$ change at each time step, i.e., $P_t \neq P_{t+1}$ and $R_t \neq R_{t+1}$. If $t_1 = t_2 = t_{bs}$, then $(s,a,t_{bs})$ is classified as an \ours{}. However, if $t_1 \neq t_2$ and $t_{bs} \in [t_1,t_2]$, then $(s,a,t_{bs})$ cannot be definitively classified as an \ours{}.
    
        \item [Case 2.] Consider $\mathcal{M}$ as a piecewise non-stationary MDP where $P_t$ and $R_t$ change every $\lfloor T/k \rfloor$ time steps, i.e., $P_t = P_{t+1}$ and $R_t = R_{t+1}$ for $t \in [k j, k j + (k-1)]$ where $j = 0, 1, \dots, \lfloor T/k \rfloor$. If $t_1 = k j_{bs}$ and $t_2 = k j_{bs} + (k-1)$, then $(s,a,t_{bs})$ is classified as an \ours{} where $j_{bs}$ satisfies $t_{bs} \in [k j_{bs}, k j_{bs} + (k-1)]$.
    
        \item [Case 3.] Consider $\mathcal{M}$ as a stationary MDP where $P_t = P_{t+1}$ and $R_t = R_{t+1}$ for all $t \in [T-1]$. In this case, $(s,a,t_{bs})$ is always classified as an \ours{}, regardless of the interval $[t_1,t_2]$.
    \end{enumerate}
    \label{ex:blackswanclassification}
\end{example}

We then present Case 3 of Example \ref{ex:blackswanclassification} as the following main remark:

\begin{remark}
    If $\mathcal{M}$ is stationary, then any black swan event \((s,a,t)\) is classified as an \ours{}. In this case, we omit \(t\) and denote the \ours{} simply as \((s,a)\).
    \label{rmk:stationaryenvironment_spatialblackswans}
\end{remark}

Our main goal for the remainder of the paper is to explore Remark \ref{rmk:stationaryenvironment_spatialblackswans}, with a focus on mathematically defining \ours within a \emph{stationary} MDP \(\mathcal{M}\). We will retain the notation for stationary transition probabilities and reward functions as \(P\) and \(R\), respectively, omitting the subscript \(t\).

\section{The Emergence of \ours in Sequential Decision Making}
\label{sec:The Emergence of s-blackswans in Sequential Decision Making}
We next present a case study to substantiate Hypothesis \ref{hyp1} before formally defining \ours. We begin by examining how \ourss emerge in sequential decision-making within a \emph{stationary environment}, starting with the bandit case. For a given $(s,a) \in \cS \times \cA$, let us assume that the function $u$ distorts the reward $R(s,a)$\cp{we do not vary $t$ so we do not need to introduce $t$}\hl{Thank. Revised}\cp{do we need to consider ``reward'' distortion or ``value'' distortion?}, and the function $w$ distorts the transition probabilities $\{ P(s^\prime | s,a)\}_{\forall s^\prime \in \cS}$\cp{same comment}\hl{revised} where $s^\prime$ is the next state. In this Section, we refer to the MDP distorted by functions $u$ and $w$ as the distorted MDP $\cM_d := \< \cS, \cA, w(P),u(R),\gamma \>$, with this notation being used exclusively within this section. \cp{did we defined distorted MDP?} \hl{LET'S HAVE A TALK ABOUT THIS LATER ON}\cp{kk} \cp{I think we might need to explain human MDP Univese MDP here}  \cp{btw, do we need to make terminology of UMDP?} \hl{I discard the term ``UMDP'' since that is just same with MDP. Novel definitions are human MDP and human-estimation MDP.}\cp{okk then we might need to explain this in the definition. (i.e., \textit{We define the world model so that that represent the ``absolute and true'' value of the world'} kinds of explanation is needed}

\subsection{Case 1. Contextual Bandit ($T=1$)}
We begin with a simple case where the horizon length is \( T = 1 \), commonly referred to as a contextual bandit \citep{lattimore2020bandit}. Surprisingly, in this setting, the optimal policy of a distorted world coincides with the real world optimal policy as a following Theorem. 

\begin{theorem}[One-Step Optimality Deviation]
    If \( T=1 \), then the optimal policy in the MDP \(\cM\) is identical to the optimal policy in the distorted MDP \(\cM_d\).
    \label{thm:one step perceived optimal policy}
\end{theorem}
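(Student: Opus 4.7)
\textbf{Proof Proposal for Theorem \ref{thm:one step perceived optimal policy}.}

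The plan is to observe that when $T=1$ the contextual-bandit value function collapses to a single instantaneous reward, so the transition-probability distortion $w$ plays no role, and the equality of optimal policies reduces to the monotonicity of the value distortion $u$. Concretely, I would first unfold the value function: for any deterministic policy $\pi$ and initial state $s$,
\begin{equation*}
V^{\pi}_{\cM}(s) \;=\; R(s,\pi(s)), \qquad V^{\pi}_{\cM_d}(s) \;=\; u\!\left(R(s,\pi(s))\right),
\end{equation*}
since with horizon $T=1$ only the time-$0$ reward contributes (there is no downstream state whose visitation probability could be reweighted by $w$). Extending to randomized $\pi$ is immediate by linearity. This already isolates the only possible source of discrepancy: the scalar transformation $u$ applied to the reward.

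Next I would establish that $u:\R\to\R$ is non-decreasing on all of $\R$. This is a direct consequence of Definition \ref{def:Value Distortion Function}: $u^+$ is non-decreasing on $\R_{\geq 0}$, $u^-$ is non-decreasing on $\R_{\leq 0}$, and the codomain constraints $u^+(\R_{\geq 0})\subseteq \R_{\geq 0}$ and $u^-(\R_{\leq 0})\subseteq \R_{\leq 0}$ force $u^-(x)\leq 0 \leq u^+(y)$ for $x\leq 0\leq y$, so monotonicity extends across the kink at $0$. Given global monotonicity, for every state $s$,
\begin{equation*}
\argmax_{a\in\cA}\, u(R(s,a)) \;\supseteq\; \argmax_{a\in\cA}\, R(s,a),
\end{equation*}
so any $\cM$-optimal action remains optimal in $\cM_d$; one can select the \emph{same} deterministic optimal policy in both MDPs, which is the content of the theorem.

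The one subtlety I would flag, and the place where the argument is most delicate, is that $u$ need not be \emph{strictly} increasing — the concavity/convexity plus the bounded derivative conditions do not rule out flat segments — so the argmax sets need not coincide exactly; the statement is properly read as ``there exists a common optimal policy,'' and I would note this and either assume strict monotonicity at the relevant reward levels or phrase the conclusion as equality of optimal-policy sets up to tie-breaking. Apart from this bookkeeping, the proof is essentially mechanical: the main idea is the vanishing of $w$'s influence in the one-step regime, which sets up the sharper, more interesting gap that emerges once $T\geq 2$ in the subsequent theorems.
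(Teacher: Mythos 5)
Your proof is correct and takes essentially the same route as the paper's: for $T=1$ both reduce the comparison to $\argmax_a R(s,a)$ versus $\argmax_a u(R(s,a))$ and invoke the monotonicity of $u$, with $w$ playing no role. Your extra remark about non-strict monotonicity and tie-breaking is a legitimate refinement of a point the paper's own proof glosses over (it simply asserts the maximizer is "still the same").
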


Theorem \ref{thm:one step perceived optimal policy} may seem counterintuitive, as Example \ref{ex:Insurance policies} illustrates that human decision-making often exhibits irrationality. In single-step decision-making, distortions in perception do not significantly affect the optimal policy. For clarification, as shown in Example \ref{ex:Insurance policies}, the perceived reward order remains \( u^-(r(s_{loss})) < u^-(r(s_{premium})) < u^-(r(s_{base})) \) because \( u^- \) is a non-decreasing convex function. This further implies that a \emph{short} decision horizon may \emph{reduce} the influence of human irrationality. 

\subsection{Case 2. $|\mathcal{S}|=2$ when $T >1$}
Now, let us consider the simplest case where $T > 1$ and $|\mathcal{S}|=2$. Surprisingly, the result that optimality does not deviate still holds similarly to Theorem \ref{thm:one step perceived optimal policy}.

\begin{theorem}[Multi-step Optimality Deviation with $|\cS|=2$]
    If $|\mathcal{S}|=2$, then the optimal policy from the MDP $\cM$ is also identical to the optimal policy of the distorted MDP $\cM_d$ for all $t \in [T]$.
    \label{thm:Multi step perceived optimal policy}
\end{theorem}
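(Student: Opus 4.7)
The plan is a backward induction on $t$ that exploits the fact that $|\mathcal{S}|=2$ collapses each Bellman step into a one-parameter problem. Write $\mathcal{S}=\{s_1,s_2\}$ and set $p(s,a):=P(s_1\mid s,a)$, so $P(s_2\mid s,a)=1-p(s,a)$; let $V^{*}_{t},Q^{*}_{t}$ and $V^{*}_{d,t},Q^{*}_{d,t}$ denote the optimal value and Q-functions of $\mathcal{M}$ and $\cM_d$ respectively.

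First I would handle the base case at the terminal decision step: since $V^{*}_{T-1}(s)=\max_a R(s,a)$ and $V^{*}_{d,T-1}(s)=\max_a u(R(s,a))$, and $u$ is non-decreasing by Definition~\ref{def:Value Distortion Function}, the argmaxes coincide and $V^{*}_{d,T-1}(s)=u(V^{*}_{T-1}(s))$. This is essentially Theorem~\ref{thm:one step perceived optimal policy} applied at the last step and provides the inductive link between the two value functions.

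For the inductive step I would assume the two optimal policies agree at all later times $t+1,\ldots,T-1$, so that $V^{*}_{d,t+1}$ is obtained from $V^{*}_{t+1}$ by $u$-distorting the same downstream reward sequences; monotonicity of $u$ then preserves state ordering, i.e., $V^{*}_{t+1}(s_1)\geq V^{*}_{t+1}(s_2)\iff V^{*}_{d,t+1}(s_1)\geq V^{*}_{d,t+1}(s_2)$. Assuming WLOG the former, I would use the two-outcome cumulative weighting to write
$$Q^{*}_{t}(s,a)=R(s,a)+\gamma V^{*}_{t+1}(s_2)+\gamma\,p(s,a)\bigl(V^{*}_{t+1}(s_1)-V^{*}_{t+1}(s_2)\bigr),$$
$$Q^{*}_{d,t}(s,a)=u(R(s,a))+\gamma V^{*}_{d,t+1}(s_2)+\gamma\,w(p(s,a))\bigl(V^{*}_{d,t+1}(s_1)-V^{*}_{d,t+1}(s_2)\bigr).$$
The key structural observation is that both Q-values depend on the action only through the scalar pair $(R(s,a),p(s,a))$ and are non-decreasing in each coordinate; hence any action Pareto-dominated in those two coordinates is dominated in both MDPs, which disposes of the easy cases and reduces the problem to ranking Pareto-incomparable actions.

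The main obstacle is precisely this remaining ranking of Pareto-\emph{incomparable} pairs—those where one action has larger $R$ but smaller $p$ than another—since $Q^{*}_{t}$ is linear in $p$ while $Q^{*}_{d,t}$ is only monotone in $p$ through the nonlinear $w$, and $u$ simultaneously distorts the $R$-axis. Closing this step requires showing that the sign of
$$\bigl[R(s,a)-R(s,a')\bigr]+\gamma\bigl[p(s,a)-p(s,a')\bigr]\bigl(V^{*}_{t+1}(s_1)-V^{*}_{t+1}(s_2)\bigr)$$
agrees with the sign of its $u,w$-distorted analogue, which is where the concavity and monotonicity properties from Definitions~\ref{def:Value Distortion Function}--\ref{def:Probability Distortion Function}, combined with the collapse to a single probability parameter afforded by $|\mathcal{S}|=2$, are indispensable. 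This one-dimensional structure is exactly what breaks down once $|\mathcal{S}|\geq 3$, foreshadowing the deviation analyzed in Theorem~\ref{thm:Two step optimal decision}.
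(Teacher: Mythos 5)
Your induction skeleton (base case at the terminal step via monotonicity of $u$, then backward induction) matches the paper's proof, but the step you yourself flag as the crux --- showing that for Pareto-incomparable actions the sign of $\bigl[R(s,a)-R(s,a')\bigr]+\gamma\bigl[p(s,a)-p(s,a')\bigr]\bigl(V^{*}_{t+1}(s_1)-V^{*}_{t+1}(s_2)\bigr)$ agrees with the sign of its distorted analogue --- is never actually established, only asserted to follow from the concavity/monotonicity properties of $u$ and $w$. This is a genuine gap, and worse, the claim is false in the generality you set up (nonzero immediate rewards at the decision step). Concretely: take $u$ to be the identity on gains (admissible under Definition~\ref{def:Value Distortion Function}), $\gamma=1$, the next-step values satisfying $V^{*}_{t+1}(s_1)-V^{*}_{t+1}(s_2)=1$ (hence also $V^{*}_{d,t+1}(s_1)-V^{*}_{d,t+1}(s_2)=1$ when $t+1$ is the terminal step), and two actions with $R(s,a)-R(s,a')=0.05$, $p(s,a)=0.3$, $p(s,a')=0.4$. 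The true gap is $0.05-0.1<0$, so $a'$ is optimal in $\cM$; but an inverse-$S$ shaped $w^+$ that is nearly flat on the mid-range (allowed by Definition~\ref{def:Probability Distortion Function}) can give $w^+(0.4)-w^+(0.3)<0.05$, making the distorted gap positive, so $a$ is optimal in $\cM_d$. Since $u$ and $w$ rescale the reward axis and the probability axis by unrelated amounts, no monotonicity or curvature argument can rescue the sign agreement for these mixed trade-off pairs.

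The paper's proof avoids this case entirely: it invokes an assumption (stated mid-proof, referencing its problem setup) that rewards before the terminal step are zero, so the term $R_t(s,a)-R_t(s,a')$ cancels from the $Q$-gap and the comparison reduces to the product of $\bigl(P(s_1\mid s,a)-P(s_1\mid s,a')\bigr)$ with the next-step value gap; monotonicity of $w$ preserves the first factor's sign and the induction hypothesis is used to preserve the second's, which is exactly your ``Pareto-dominated'' easy case. So to repair your argument you must either import that zero-intermediate-reward assumption (in which case your incomparable case disappears and the proof closes as in the paper), or add some comparable restriction; as written, the remaining ranking step cannot be closed. A secondary soft spot: your claim that agreement of the optimal policies at later times plus monotonicity of $u$ yields $V^{*}_{t+1}(s_1)\geq V^{*}_{t+1}(s_2)\iff V^{*}_{d,t+1}(s_1)\geq V^{*}_{d,t+1}(s_2)$ is also not immediate for general per-step rewards (the distorted value is not $u$ of the true value beyond one step), though under the terminal-reward-only assumption it becomes defensible.
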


Theorem \ref{thm:Multi step perceived optimal policy} may initially seem counterintuitive, given that model errors propagate through distorted transition probabilities and rewards as time \(t\) progresses \citep{janner2019trust}. However, a straightforward explanation is that for any state-action pair \((s,a) \in \mathcal{S} \times \mathcal{A}\), the function \(w\) preserves the order of probabilities. Specifically, if \(P(s_1|s,a) > P(s_2|s,a)\), then \(w(P(s_1|s,a)) > w(P(s_2|s,a))\) still holds, where \(\mathcal{S} = \{s_1, s_2\}\). This suggests that when the state space \(|\mathcal{S}|\) is small, the informational complexity required to determine the real-world optimal action remains relatively \emph{low}.

\subsection{Case 3. $|S|=3$ with unbiased reward perception}
We now consider a general setting with arbitrary \(\mathcal{S}\), \(\mathcal{A}\), and \(T\), but under the assumption that \( u(R(s,a)) = R(s,a) \) for all \((s,a)\), indicating that humans have an unbiased perception of their rewards.

\begin{theorem}[Two-step Optimality Deviation with $|\cS|=3$]
    If $|\mathcal{S}|=3$ and $T=2$, there exists a transition probability function \(P\) and a reward function \(R\) such that the optimal policy of the MDP $\mathcal{M}$ differs from that of the distorted MDP $\mathcal{M}_d$.
    \label{thm:Two step optimal decision}
\end{theorem}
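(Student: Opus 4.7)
The plan is to exhibit an explicit three-state, two-action MDP with horizon two in which the probability distortion $w$ alone reverses the optimal first-step action. Since $u$ is the identity on rewards by hypothesis, and since Theorem \ref{thm:one step perceived optimal policy} applied at the terminal step forces the $t=1$ policies to agree in $\mathcal{M}$ and $\mathcal{M}_d$, any divergence must appear in the Q-values at the initial state, so the task reduces to engineering a single-step expected-value reversal driven purely by $w$.

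First I would fix $\mathcal{S}=\{s_0,s_G,s_B\}$ with $s_0$ as the start, and set action-independent rewards $R(s_0,\cdot)=0$, $R(s_G,\cdot)=r_G>0$, $R(s_B,\cdot)=-r_B<0$. This ensures $V_d(s)=V(s)$ at $t=1$ and yields
\[
Q(s_0,a) \;=\; \gamma\bigl(P(s_G|s_0,a)\,r_G \;-\; P(s_B|s_0,a)\,r_B\bigr),
\]
with $Q_d$ obtained by replacing each probability by its distorted value (note that with exactly one gain outcome and one loss outcome, per-coordinate and cumulative distortion coincide, so the argument is insensitive to which convention of $w(P)$ one picks in $\mathcal{M}_d$). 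I would then parameterize transitions as $P(\cdot|s_0,a_1)=(\text{rest},\,p_G+\alpha,\,p_B+\beta)$ and $P(\cdot|s_0,a_2)=(\text{rest},\,p_G,\,p_B)$ with $\alpha,\beta>0$, giving the clean true difference $Q(s_0,a_1)-Q(s_0,a_2)=\gamma(\alpha r_G-\beta r_B)$ and an analogous distorted difference whose coefficients are the secant slopes of $w^+$ and $w^-$ at $(p_G,p_G+\alpha)$ and $(p_B,p_B+\beta)$ respectively.

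The crux is to exploit Definition \ref{def:Probability Distortion Function}. Concavity of $w^+$ on $[0,a)$ with $w^+(0)=0$ and $w^+(a)=a$ forces $w^+(x)>x$ strictly on $(0,a)$, so by the mean value theorem there is a neighborhood of $0$ where the secant slope of $w^+$ (and, identically, of $w^-$ near $b$) exceeds some $\rho_2>1$; convexity of $w^+$ on $(a,1]$ with $w^+(1)=1$ analogously produces an interior interval where the secant slope is at most some $\rho_1<1$. Pick $p_B$ and $\beta$ small so the $w^-$-slope exceeds $\rho_2$, and $p_G$ just above $a$ with $\alpha$ small so the $w^+$-slope is at most $\rho_1$. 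Finally, choose $r_G,r_B>0$ with ratio $\alpha r_G/(\beta r_B)$ inside the nonempty open interval $(1,\,\rho_2/\rho_1)$; this simultaneously gives $\alpha r_G>\beta r_B$ (so $a_1$ beats $a_2$ in $\mathcal{M}$) and $\rho_1\alpha r_G<\rho_2\beta r_B$ (so $a_2$ beats $a_1$ in $\mathcal{M}_d$), producing the required disagreement between the two optimal first-step actions.

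The main obstacle is verifying that a single $w$ actually admits both regimes with strict inequalities $\rho_1<1<\rho_2$. This is where the full force of Definition \ref{def:Probability Distortion Function} enters: one must read ``$(w^\pm)'$ decreasing/increasing'' strictly, for otherwise $w$ collapses to the identity on one of the sub-intervals and the distortion becomes trivial. Under genuine strictness, the coexistence of an over-weighted region near $0$ and an under-weighted region near the fixed point follows by elementary concavity/convexity together with the three pinned values, and the construction goes through.
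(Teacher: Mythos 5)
Your route is genuinely different from the paper's: the paper argues by contradiction, writing out the distorted preference condition with cumulative weights and showing that if the optimal policies of $\cM$ and $\cM_d$ agreed for \emph{every} choice of $P$ and $R$, the resulting ratio identity between distorted and undistorted probability differences would force $w(p)=p$, contradicting Definition \ref{def:Probability Distortion Function}. You instead build an explicit counterexample and, using the identity-$u$ assumption and terminal-step agreement, reduce everything to comparing secant slopes of $w^{+}$ and $w^{-}$ over two small probability intervals, then place the reward ratio strictly between the true and distorted slope ratios. That reduction is sound, and your remark that with a single gain outcome and a single loss outcome the per-coordinate and rank-dependent (cumulative) conventions give the same decision weights on $s_G$ and $s_B$ is essentially correct (under the paper's single-$w$ cumulative convention the gain weight becomes $1-w(1-P(s_G))$, but the secant-slope argument adapts). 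Your explicit strictness caveat on $(w^\pm)'$ is also fair; the paper's proof needs the same non-triviality of $w$ implicitly.

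There is, however, one step that fails as written: you instruct to ``pick $p_B$ and $\beta$ small so the $w^-$-slope exceeds $\rho_2>1$.'' Under Definition \ref{def:Probability Distortion Function}, $(w^-)'$ is increasing on $[0,b)$ with $w^-(0)=0$ and $w^-(b)=b$, so $w^-$ is convex and lies \emph{below} the diagonal on $(0,b)$; the secant slope of $w^-$ over any interval near $0$ is therefore at most $1$ (strictly below $1$ under your strict reading), never above it — indeed, this underweighting of rare loss probabilities is exactly the feature the paper later exploits to define \ours. The region where the $w^-$ secant exceeds $1$ is a neighborhood of the fixed point $b$, as your own parenthetical ``of $w^-$ near $b$'' acknowledges, and $b$ is a fixed constant of $w^-$, not a parameter you can shrink. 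With $p_B,\beta$ near $0$ you get $\sigma^-\leq 1$ and $\sigma^+<1$ with no guaranteed ordering, so a reward ratio in $(1,\rho_2/\rho_1)$ no longer produces the reversal. The repair is local: place the loss interval $[p_B,p_B+\beta]$ just below or straddling $b$ so that its secant slope exceeds some $\rho_2>1$, or, more flexibly, note that it suffices to find intervals on which the gain-side and loss-side secant slopes merely \emph{differ}, and then choose the reward ratio strictly between the two slope ratios on whichever side of $1$ is appropriate. With that correction your construction goes through and gives a fully explicit witness for the theorem.
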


The optimality deviation in Theorem \ref{thm:Two step optimal decision} now aligns with the empirical observation in model-based reinforcement learning; increasing suboptimality is caused by model error propagation \citep{janner2019trust}. In summary, Theorems \ref{thm:one step perceived optimal policy}, \ref{thm:Multi step perceived optimal policy}, and \ref{thm:Two step optimal decision} demonstrate that the discrepancy between the optimal policy derived from human perception and the real-world optimal policy increases as the complexity of the environment (\(\mathcal{S}\)) grows or as the horizon length (\(T\)) extends, regardless of the \(w\) function.
\section{Agent- Environment framework \cp{do you wanna include agent enviroenment boundary in this paper?}: perception as intersection}
\label{sec:Novel Agent- Environment framework: perception as intersection}
To explore Hypothesis \ref{hyp1}, we propose a novel agent-environment framework that treats misperception as information loss in an agent's understanding of the real world \footnote{We detail how misperception reflects information loss from the agent's perspective in Appendix \ref{sec:Misperception is information loss}.
.} (See Figure \ref{fig:perception}). This framework introduces two \emph{stationary} MDPs: the Human MDP and the Human-Estimation MDP. We begin by defining the \emph{stationary} ground MDP (GMDP) $\cM$ as an abstraction of real-world environments without information loss. The following subsections detail the Human MDP (HMDP) and the Human-Estimation MDP (HEMDP).

\subsection{Human MDP}
\label{subsec:Human MDP}
We define the Human MDP $\mathcal{M}^\dagger = \langle \mathcal{S}, \mathcal{A}, P^\dagger, R^\dagger, \gamma, T \rangle$, where the human (agent) misperceives the visitation probability $P^\pi(s,a)$ through the function $w$, denoted as $P^{\dagger,\pi}(s,a)$, and the reward function $R(s,a)$ through the function $u$, denoted as $R^\dagger(s,a)$. An internal assumption in the HMDP is that its state and action spaces are identical to those of the GMDP $\cM$, i.e., $\cS^\dagger = \cS$ and $\cA^\dagger = \cA$. Although this assumption may seem unrealistic, especially given that insufficient exploration in large discrete state and action spaces may violate it, the following method shows how the human (agent) can approximate $\cS^\dagger$ and $\cA^\dagger$ to $\cS$ and $\cA$, thus supporting this assumption.

\begin{remark}
    If the human (agent) cannot perceive a state $s \in \cS$, the state space $\cS^\dagger$ can be updated to $\cS^\dagger \leftarrow \cS^\dagger \cup \{s\}$, then set $R^\dagger(s, a) = R(s, a)$ and $P^\dagger(s' \mid s, a) = P(s' \mid s, a)$ while ensuring $P(s \mid s', a) = 0$ for all $s \in \cS^\dagger$ and $a \in \cA^\dagger$. As a result, the new state $s$ does not influence decision-making in the HMDP, since the probability of the trajectory visiting $s$ remains zero.
    \label{rmk:augmentation}
\end{remark}
For discrete $\cS$ and $\cA$, the order statistics of $P^\pi$ can be defined over the sequence $[|\cS||\cA|]$, with each $(s,a)$ corresponding to an order index in $[|\cS||\cA|]$, enabling the subsequent definition of the cumulative distribution. For brevity, we denote the cumulative distribution of $P^\pi(s,a)$ as $\int P^\pi(s,a)$ \cp{I think prob theory people does not use this definition. $\int$ always require derivatives ($dx$) I would recommend you to use cumulative prob distribution $F$ and its derivative (which can be assumed to exist by radon-nikodym theorem)}. The distortions are then defined by the following relationships: \cp{might be better to check (1) is widely-used notation. I am afraid to say it would be not that common (at least I have never seen this notation)}
\begin{align}
    \label{eq:visitationdistortion}
    \int P^{\dagger,\pi} (s,a) =&
    \begin{cases}
     w^+ (\int P^\pi (s,a))~\text{if}~R(s,a) \geq 0 \\
    w^- (\int P^\pi (s,a))~\text{if}~R(s,a) < 0 
    \end{cases}~,\forall (s,a) \in \cS \times \cA \\
    R^\dagger (s,a) =&
    \begin{cases}
    u^+ (R (s,a))~\text{if}~R(s,a) \geq 0 \\
    u^- (R (s,a))~\text{if}~R(s,a) < 0 
    \end{cases}~,\forall (s,a) \in \cS \times \cA
    \label{eq:rewarddistortion}
\end{align}
We introduce the concept of the \emph{perception gap}: if $\max_{(s,a)}|R(s,a) - R^\dagger(s,a)| < \epsilon_r$, then $R^\dagger(s,a)$ is referred to as an $\epsilon_r$-perceived reward. Similarly, if $\max_{(s,a)} |P^\pi(s,a) - P^{\pi,\dagger}(s,a)| < \epsilon_d$, then $P^{\dagger,\pi}(s,a)$ is called an $\epsilon_d$-perceived visitation probability, where $\epsilon_r,\epsilon_d \in \R_+$. The case where $\epsilon_r = \epsilon_d = 0$ represents an \emph{unbiased perception}. Once the agent perceives $\cM$ as $\cM^\dagger$, it executes the policy $\pi$ in $\mathcal{M}^\dagger$ and collects a trajectory. Finally, the value function of $\cM^\dagger$ is given by $V^\pi_{\cM^\dagger}(s):= \E_{\pi}\left[ \gamma^t R^\dagger(s_t,a_t) \big| P^\dagger,s_0=s\right]$.

A key challenge in understanding $\cM^\dagger$ is why distortions occur in visitation probability rather than transition probability, as discussed in Section \ref{sec:Novel Agent- Environment framework: perception as intersection}. This distinction arises because $(s,a)$ is the fundamental event unit (see Remark \ref{rmk:stationaryenvironment_spatialblackswans}), and a distortion in transition probability implies a distortion in the state itself. The central question, then, is how distortions in visitation probability relate directly to data collection. The following lemma partially addresses this question.
\begin{lemma}
    For a given $\cM$, there always exists a function $h:\cS \to \cS$ such that
    $w\left(\int P^\pi(s,a)\right) = \int P^\pi(h(s),a)$ holds for any function $w$.
    \label{lemma:conversion}
\end{lemma}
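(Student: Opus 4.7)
The plan is to construct $h$ explicitly by exploiting the state-augmentation flexibility granted by Remark~\ref{rmk:augmentation}. The intuition is that distorting the cumulative visitation probability by $w$ is operationally equivalent to relocating the argument $s$ to a (possibly newly added) state whose true cumulative visitation value already coincides with the distorted number; once this dictionary is in place, the claimed identity holds by construction.

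First, I would fix $a \in \cA$ and treat the cumulative visitation $F_a(s) := \int P^\pi(s,a)$ as an increasing step function on the ordering of $\cS$ that the $\int$-notation presumes, with values in $[0,1]$. Because $w:[0,1]\to[0,1]$ sends $0$ to $0$ and $1$ to $1$, each $w(F_a(s))$ is again a legitimate candidate cumulative value in $[0,1]$. Next, I would enlarge $\cS$ via Remark~\ref{rmk:augmentation}: for every $s \in \cS$ for which no existing state $s' \in \cS$ already satisfies $F_a(s') = w(F_a(s))$, append an auxiliary state $\tilde{s}$ whose zero visitation probability is inserted into the ordering so that $F_a(\tilde s) = w(F_a(s))$. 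Since $\tilde s$ is unreachable under every policy (incoming transition probability zero), this augmentation leaves the GMDP's value functions and optimal policies unchanged, while supplying the indexical richness required for the map $h$ to be well defined.

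Finally, I would define $h(s)$ to be any state, existing or appended, satisfying $F_a(h(s)) = w(F_a(s))$; this yields a function $h:\cS\to\cS$ realizing the identity $w(\int P^\pi(s,a)) = \int P^\pi(h(s),a)$. The claim ``for any $w$'' is accommodated by noting that the augmentation is performed after $w$ is fixed, i.e., $\cS$ (and hence $h$) is allowed to depend on $w$; this avoids the otherwise impossible demand that a single $h$ work uniformly over all distortion functions.

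The main obstacle is consistency across actions: since $F_a$ depends on $a$, a single $h$ compatible with every $a \in \cA$ requires performing the augmentation jointly over all $(s,a)$-pairs, which enlarges $\cS$ by at most a finite number of dummy states and can be done by running the construction action-by-action and taking the union. An easier alternative, if one reads the lemma as permitting an action-indexed family $\{h_a\}_{a\in\cA}$, is to stop after the per-action construction above, at which point nothing further is needed.
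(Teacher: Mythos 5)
Your construction has a genuine gap at its central step. You propose to realize the distorted value $w\left(\int P^\pi(s,a)\right)$ as a true cumulative value $\int P^\pi(h(s),a)$ by appending, via Remark~\ref{rmk:augmentation}, auxiliary states $\tilde s$ with zero visitation probability and ``inserting'' them into the ordering so that $F_a(\tilde s) = w(F_a(s))$. But the cumulative visitation distribution is a step function whose attained values are exactly the partial sums of the existing visitation probabilities; a state with zero mass inserted anywhere in the ordering inherits the cumulative value of the state immediately below it and cannot create any new attained value. Since $w(F_a(s))$ will generically lie strictly between two consecutive partial sums, no amount of augmentation by unreachable null states produces a state $h(s)$ with $F_a(h(s)) = w(F_a(s))$. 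The augmentation device is thus doing no work: it preserves the value functions precisely because it leaves the visitation distribution untouched, which is exactly why it cannot supply the ``indexical richness'' your argument needs. Your final remark about cross-action consistency is a secondary concern; the per-action construction already fails.

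The paper's proof avoids this by leaving the discrete state-augmentation picture entirely and working in reward space: it assumes the image of $R$ is closed and dense in $[-R_{\max},R_{\max}]$, pushes the visitation measure forward to a reward distribution $\P_\cR$ with cumulative distribution $F$, and then defines $h$ at the level of rewards essentially by $r \mapsto w^-(F(r))$, using the richness of the (continuous) reward range to guarantee that every distorted cumulative value is genuinely attained by $F$ at some shifted point, after which monotonicity of $w^-$ gives the one-to-one property. In other words, the solvability of the equation $F(h(s)) = w(F(s))$ is obtained from a continuity/density assumption on the environment, not from adding zero-probability states. If you want to salvage your approach in a genuinely discrete setting, you would need either such a continuity assumption or a construction that redistributes actual probability mass, which Remark~\ref{rmk:augmentation} expressly forbids.
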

Our perspective is that distortions in the probability distribution, state space, or other factors lead to distortions in visitation probabilities. With unbiased perception, the agent collects a trajectory $\tau = \{ s_0,a_0,r_0,s_1,a_1,\dots,s_{T-1},a_{T-1},s_T\}$. However, when the agent perceives $\cM$ as $\cM^\dagger$, it observes a perceived trajectory $\tau^\dagger = \{ h(s_0),a_0,u(r_0),h(s_1),a_1,\dots,h(s_{T-1}),a_{T-1},h(s_T)\}$, where function $h$ distorts the states. Lemma \ref{lemma:conversion} demonstrates that visitation probability distortion arises from state distortion via $h$.
% The following lemma shows that there always exists a function $h$ such that $\tau$ and $\tau^\dagger$ are fully related by Equations \eqref{eq:visitationdistortion} and \eqref{eq:rewarddistortion}.

\cp{any explanation on this lemma? Why that can be hold? and what does it mean? and why that can be happen? Why this lemma is important? etc etc.. }
% Note that Lemma \ref{lemma:conversion} allows us to design the problem so that the agent distorts the visitation probability by receiving a distorted state induced by the function $g$. 

\subsection{Human-Estimation MDP}
\label{subsec:Human-Estimation MDP}

After the agent have perceived the ground truth world as $\cM^\dagger$, it \emph{estimates} the perceived reward $R^\dagger(s,a)$ as $\widehat{R}^\dagger(s,a)$ and visitation probability $P^{\dagger,\pi}(s,a)$  as $\widehat{P}^{\dagger,\pi}(s,a)$ from its perceived trajectory $\tau^\dagger$. 
\begin{wrapfigure}{r}{0.3\textwidth}
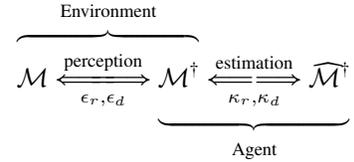

    \begin{equation*}
        \lefteqn{\overbrace{\phantom{\cM \xLongleftrightarrow[\textcolor{red}{\epsilon_r,\epsilon_d}]{\text{perception}} \cM^\dagger }}^{\text{Environment}}} 
     \cM \xLongleftrightarrow[\epsilon_r,\epsilon_d]{\text{perception}} \underbrace{\cM^\dagger \xLongleftrightarrow[\kappa_r,\kappa_d]{\text{estimation}} \widehat{\cM}^\dagger}_{\text{Agent}}
    \end{equation*}
    \caption{The agent and environment intersect with perception.}
    \label{fig:perception}
\end{wrapfigure}
We define a Human-Estimation MDP as $\widehat{\mathcal{M}}^\dagger = \langle \mathcal{S}, \mathcal{A}, \widehat{P}^\dagger, \widehat{R}^\dagger, \gamma, T \rangle$. \cp{the same comment as 5.1. is it related to universal MDP? } Note that this estimation process is the same as estimation of generative model in model-based reinforcement learning \citep{gheshlaghi2013minimax,sidford2018near,agarwal2020model,kakade2003sample}. We also introduce \emph{estimation gap}, that is if $\max_{(s,a)}|R^\dagger(s,a) - \widehat{R}^\dagger(s,a)| \leq \kappa_r$ holds, then $\widehat{R}^\dagger(s,a)$ is $\kappa_r$-estimated reward, and if $\max_{(s,a)}|P^{\pi,\dagger}(s,a) - \widehat{P}^{\pi,\dagger}(s,a)| \leq \kappa_d$ holds, then $\widehat{P}^{\pi, \dagger}(s,a)$ is $\kappa_d$-estimated visitation probability for constant $\kappa_r,\kappa_d \in \R_+$. Finally, the value function of $\widehat{\cM}^\dagger$ is given as $V^\pi_{\widehat{\cM}^\dagger}(s):= \E_{\pi}\left[ \gamma^t \widehat{R}^\dagger(s_t,a_t) \big| \widehat{P}^\dagger,s_0=s\right]$.

We use the perception and estimation gaps to illustrate the novel agent-environment framework in Figure \ref{fig:perception}.

% \begin{figure}[htbp]
% \centering
% \begin{equation*}
%     \lefteqn{\overbrace{\phantom{\cM \xLongleftrightarrow[\textcolor{red}{\epsilon_r,\epsilon_d}]{\text{perception}} \cM^\dagger }}^{\text{Environment}}} 
%  \cM \xLongleftrightarrow[\epsilon_r,\epsilon_d]{\text{perception}} \underbrace{\cM^\dagger \xLongleftrightarrow[\kappa_r,\kappa_d]{\text{estimation}} \widehat{\cM}^\dagger}_{\text{Agent}}
% \end{equation*}
% \caption{Agent-environment intersects with perception\cp{so human is enviroment but also agent? }}
% \label{fig:perception}
% \end{figure}

\section{\ours}
\label{sec:s-blackswan}
Finally, Section \ref{sec:s-blackswan} provides a definition of \ours and presents a theoretical analysis aimed at guiding the design of safer ML algorithms in the future.
\subsection{A Definition of \ours}
\label{subsec:a_definition_of_s-blackswan}
Assume that the rewards for all state-action pairs are ordered as \(R_{[1]} \leq \dots \leq R_{[l]} \leq 0 \leq R_{[l+1]} \leq \cdots \leq R_{[|\mathcal{S}||\mathcal{A}|]}\), and the visitation probabilities are ordered as \(P^\pi_{[1]} \leq P^\pi_{[2]} \leq \cdots \leq P^\pi_{[|\mathcal{S}||\mathcal{A}|]}\). We denote the order index of $R(s,a)$ as \(I_r(s,a) \in [|\cS||\cA|]\) and the order index of $P^\pi(s,a)$ as \(I_p(s,a) \in [|\cS||\cA|]\), such that \(R_{[I_r(s,a)]} = R(s,a)\) and \(P^\pi_{[I_p(s,a)]} = P^\pi(s,a)\). We first provide the definition of \ours in case of discrete state and action space.

\begin{definition}[\ours - Discrete State and Action Space]
Given distortion functions $u,w$ and constants \(C_{bs} \gg 0\) and \(\epsilon_{bs} > 0\), if \((s,a)\) satisfies:
\begin{enumerate}
    \item (High-risk): \(R_{[I_r(s,a)]} - u^-(R_{[I_r(s,a)]}) < -C_{bs}\).
    \item (Rare): \(w^-\left(\sum_{j=1}^{I_p(s,a)} P^\pi_{[j]} \right) = w^- \left(\sum_{j=1}^{I_p(s,a)-1} P^\pi_{[j]} \right)\), yet \(0 < P^\pi_{[I_p(s,a)]} < \epsilon_{bs}\).
\end{enumerate}
then we define \((s,a)\) as \ours.
    \label{def:blackswan-discrete}
\end{definition}

Definition \ref{def:blackswan-discrete} finally formalizes the informal concept of black swan events introduced in Section \ref{sec:Black Swan in stationary and non-stationary environments}. The first property of Definition \ref{def:blackswan-discrete} identifies a \emph{high-risk event} through value distortion. Specifically, if the agent perceives \(R\) optimistically, such that \(R \ll u^-(R) < 0\), it is classified as a high-risk event (see Figure \ref{fig:Utility function2}). The second property characterizes a \emph{rare event} through probability distortion, describing an \ours event that occurs with a small probability in the real world \(\left( 0 < P^\pi_{[I_p(s,a)]} < \epsilon_{bs}\right)\), but is perceived by the agent as infeasible \(\left( w^-\left(\sum_{j=1}^{I_p(s,a)} P^\pi_{[j]} \right) = w^- \left(\sum_{j=1}^{I_p(s,a)-1} P^\pi_{[j]} \right)\right)\) (See Figure \ref{fig:Weight function2}).

The constants \(C_{bs}\) and \(\epsilon_{bs}\) in Definition \ref{def:blackswan-discrete} quantify the extent of distortion in the functions $u$ and $w$, respectively. Intuitively, \(C_{bs}\) and \(\epsilon_{bs}\) are directly related to the magnitude of the misperception gap between \(\mathcal{M}\) and \(\mathcal{M}^\dagger\), denoted by \(\epsilon_r\) and \(\epsilon_p\). This relationship will be further formalized in Theorem \ref{thm1}. We now extend the definition of \ours to continuous state and action spaces. Suppose the reward function $R: \cS \times \cA \to \R$ is bijective. Then, the probability $R^{-1} \circ P^\pi : \R \to [0,1]$ denotes the probability of a feasible reward induced by policy $\pi$, denoted as $\P_r$. We then have the following definition.

\begin{definition}[\ours - Continuous State and Action Space]
Given distortion functions $u,w$ and constants \(C_{bs} \gg 0\) and \(\epsilon_{bs} > 0\), if \((s,a)\) satisfies:
\begin{enumerate}
    \item \(R(s,a) - u^-(R(s,a)) < -C_{bs}\).
    \item \(\frac{d w^-(x)}{dx} \big|_{x=F(R(s,a))} \cdot \P_r(r = R(s,a)) = 0\), yet \(0 < \P_r(r = R(s,a)) < \epsilon_{bs}\),
\end{enumerate}
where \(F(r) := \int_{-\infty}^r d\P_r\) is the cumulative distribution of \(\P_r\), then we define \((s,a)\) as \ours.
    \label{def:blackswan}
\end{definition}

We then define the minimum probability of \ours as \(\epsilon^{\min}_{bs}\), denoted as \(\epsilon^{\min}_{bs} := \min_{(s,a)} \P_r(r = R(s,a))\). Let \(\cB\) denote the collection of all \ours. For given constants \(C_{bs}\) and \(\epsilon_{bs}\), we define the distortion functions \(w^-\) and \(u^-\) that result in \(\cB = \emptyset\) as \(w^-_\star\) and \(u^-_\star\), respectively. Intuitively, \(w^-_\star\) and \(u^-_\star\) represent a \emph{safe} perception, meaning that if an agent perceives the world through those, then \(\cB = \emptyset\). However, it is important to note that \(w^-_\star\) and \(u^-_\star\) are not unique functions (see Figure \ref{fig:Weight function2}).

\subsection{Theoretical Analysis of \ours}
\label{subsec:theortical_analysis_of_s-blackswans}
Subsection \ref{subsec:theortical_analysis_of_s-blackswans} explores the properties of \ours, focusing on how their presence establishes a lower bound on policy performance (Theorem \ref{thm1}) and the timing of their occurrences (Theorem \ref{thm2}), laying the groundwork for future algorithm design. For further analysis, we assume the following.
\begin{assumption}[Relative convexity]
    Assume $u^-_\star(r) \leq u^-(r)$ holds for $r <0$.
    \label{ass:Relative convexity}
\end{assumption}
Assumption \ref{ass:Relative convexity} ensures that a human (agent) with \(u^-\) perceives rewards more optimistically than one with \(u^-_\star\) across all \((s,a)\) pairs. This concept is well illustrated in Figure \ref{fig:Utility function2}, where the function \(u^-(r) = r\) represents an \emph{unbiased perception}, and deviations from this line indicate increasing reward distortion. In conjunction with Assumption \ref{ass:Relative convexity}, we introduce a proposition regarding \ours, enabling interpretation within the reward space $[-R_{\max}, R_{\max}]$.

\begin{proposition}[\ours]
    Let the intersection of the functions \(r + C_{bs}\) and \(u^-(r)\) occur at \(r = -R_{bs}\) (see Figure \ref{fig:Utility function2}). Under Assumption \ref{ass:Relative convexity}, if \(r(s,a) \in [-R_{\max}, -R_{bs}]\) satisfies:
\begin{enumerate}
    \item \(r - u^-(r) < -C_{bs}\),
    \item \(w^-\left(F(r)\right) = 0\), with \(0 < F(r) < \epsilon_{bs}\),
\end{enumerate}
then the \((s,a)\) is \ours.
\label{Prop1}
\end{proposition}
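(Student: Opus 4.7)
The plan is to verify directly that any state-action pair $(s,a)$ with reward $r = R(s,a)$ satisfying the two hypotheses of Proposition \ref{Prop1} meets the two bullets of Definition \ref{def:blackswan}. Hypothesis (1) is literally the high-risk clause of Definition \ref{def:blackswan}, so no work is required beyond clarifying why $[-R_{\max}, -R_{bs}]$ is the natural reward range. Under Assumption \ref{ass:Relative convexity}, the convexity of $u^-$ together with $u^-(0) = 0$ and the slope condition $\lim_{h\to 0^-}(u^-)'(h) > 1$ forces the affine function $r + C_{bs}$ to cross $u^-$ at a single negative point $r = -R_{bs}$, with $u^-(r) > r + C_{bs}$ for $r \in [-R_{\max}, -R_{bs})$ and the reverse inequality on $(-R_{bs}, 0]$. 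Hence the strict inequality $r - u^-(r) < -C_{bs}$ can only be realized on the stated interval, which makes condition (1) equivalent to the high-risk clause.

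The second hypothesis requires a short argument. From Definition \ref{def:Probability Distortion Function}, $w^- : [0,1] \to [0,1]$ is continuous and non-decreasing with $w^-(0) = 0$, so the equality $w^-(F(r)) = 0$ together with $F(r) > 0$ forces $w^- \equiv 0$ on the entire interval $[0, F(r)]$ (otherwise monotonicity would produce a strictly positive value at $F(r)$). Consequently $(w^-)' \equiv 0$ on $(0, F(r))$, and since $(w^-)'$ is increasing on $[0, b)$ and hence continuous at the endpoint, we obtain $(w^-)'(F(r)) = 0$. This immediately yields $\frac{d w^-(x)}{dx}\big|_{x = F(r)} \cdot \P_r(r = R(s,a)) = 0$, independently of the probability factor. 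The upper bound $\P_r(r = R(s,a)) < \epsilon_{bs}$ then follows from the trivial inequality $\P_r(\{r\}) \leq F(r) < \epsilon_{bs}$, and strict positivity is inherited from $(s,a)$ having nonzero visitation probability under $\pi$.

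The step I expect to be the main obstacle is the reconciliation between the CDF-level bound $0 < F(r) < \epsilon_{bs}$ stated in the proposition and the point-mass-level condition $0 < \P_r(r = R(s,a)) < \epsilon_{bs}$ in Definition \ref{def:blackswan}. In the discrete setting of Definition \ref{def:blackswan-discrete}, the two quantities agree via the cumulative-sum structure of the order statistics $P^\pi_{[1]} \leq \cdots \leq P^\pi_{[|\cS||\cA|]}$, and the translation is immediate. In a genuinely continuous setting, however, $\P_r(\{r\})$ vanishes unless $\P_r$ has an atom at $r$, so one must either reinterpret $\P_r(r = R(s,a))$ as a density and upgrade the hypothesis to a density bound, or restrict attention to distributions in which the black-swan reward is attained by a strictly positive point mass. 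Fixing this convention is the only non-routine step; once it is specified, the rest of the verification reduces to combining the monotonicity of $w^-$ with the geometric analysis of the convex function $u^-$ relative to the line $r + C_{bs}$.
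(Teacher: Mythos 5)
Your proposal is correct in substance and takes the only natural route: unpack Definition \ref{def:blackswan} and check its two clauses for $r=R(s,a)$. It is worth noting that the paper itself gives no proof of Proposition \ref{Prop1} anywhere in the appendix --- it is treated as an immediate restatement of Definition \ref{def:blackswan} on the interval $[-R_{\max},-R_{bs}]$ --- so there is no alternative argument to compare against; your verification simply makes explicit what the authors leave implicit (your geometric discussion of the single crossing of $u^-$ with $r+C_{bs}$ is supplementary rather than needed, and it relies on convexity of $u^-$ from Definition \ref{def:Value Distortion Function} more than on Assumption \ref{ass:Relative convexity}). Two remarks on the details. First, the step ``$(w^-)'$ is increasing on $[0,b)$ and hence continuous at the endpoint'' is not a valid inference, since monotone functions need not be continuous; the clean argument is that $w^-\equiv 0$ on $[0,F(r)]$ forces every left difference quotient of $w^-$ at $F(r)$ to vanish, so whenever the derivative invoked in Definition \ref{def:blackswan} exists at $x=F(r)$ it must equal $0$, which is all you need for the product condition. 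Second, the mismatch you flag between the CDF-level hypothesis $0<F(r)<\epsilon_{bs}$ and the point-mass requirement $0<\P_r(r=R(s,a))<\epsilon_{bs}$ is genuine, but it is a looseness in the paper's own continuous-space definition (which also sets $\epsilon^{\min}_{bs}:=\min_{(s,a)}\P_r(r=R(s,a))$ as if atoms exist); your resolution --- reading $\P_r(r=R(s,a))$ as the visitation mass of $(s,a)$, positive for visited pairs and bounded above by $F(r)$ via the right-continuous $F(r)=\int_{-\infty}^r d\P_r$ --- is consistent with how these quantities are used in the proofs of Theorems \ref{thm1} and \ref{thm2}, so flagging the convention is the right call rather than a defect of your argument.
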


A key insight from Proposition \ref{Prop1} is that as \(u^-(r)\) approaches \(u^-_\star(r)\), the approximation \( -R_{bs} \to -R_{\max} \) occurs, finally leading to \( |\cB| \to 0 \) since $|[-R_{\max},-R_{bs}]| \to 0$ (see Figures \ref{fig:Utility function2}). In other words, Proposition \ref{Prop1} demonstrates that reducing the perception gap directly correlates with a decrease in $|\cB|$. 

Now, to provide an guideline for designing safe learning algorithms to prevent \ours, it is crucial to quantify how the existence of \ours leads to an inevitable deviation from the real-world optimal policy. We address this by analyzing how the misperception gap establishes a lower bound on the value function gap between the HMDP $\cM^\dagger$ and the GMDP $\cM$, as presented in the following theorem.

\begin{theorem}[Convergence of value estimation gap but lower bound on value perception gap]
Under Assumption \ref{ass:Relative convexity}, the asymptotic convergence of the value function estimation holds as follows,
\begin{equation}
    V^\pi_{\widehat{\cM}^\dagger}(s) \to V^\pi_{\cM^\dagger}(s) \quad \text{a.s.} \quad \text{as} \quad T 
    \to \infty,~\forall s,\pi \in \cS \times \Pi.
    \label{eq:thm1_eq1}
\end{equation}
However, under specific conditions on $\epsilon_{bs}, \epsilon^{\min}_{bs}, R_{bs}$, the lower bound of value perception gap as follows.
\begin{equation}
     | V^\pi_{\cM^\dagger}(s) - V^\pi_\cM(s) | = \Omega \left(\frac{ \left( \left( R_{\max }- R_{bs} \right) \epsilon^{\min}_{bs} - R_{bs} \epsilon_{bs} \right) (R_{\max}-R_{bs})C_{bs}}{R^2_{\max}} \right)
     \label{eq:thm1_eq3}
\end{equation}
\label{thm1}
\end{theorem}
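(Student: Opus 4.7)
\textbf{Part 1 (estimation gap, Eq.~\eqref{eq:thm1_eq1}).} The plan is a standard simulation-lemma telescoping. Writing both value functions in normalized visitation form $V^\pi_{\cM^\dagger}(s)=\tfrac{1}{1-\gamma}\sum_{(s',a)}P^{\dagger,\pi}(s',a)R^\dagger(s',a)$, inserting the mixed quantity $\widehat{P}^{\dagger,\pi}R^\dagger$, and using the sup-norm estimation gaps $\kappa_r,\kappa_d$ yields a bound of the form $|V^\pi_{\widehat{\cM}^\dagger}(s)-V^\pi_{\cM^\dagger}(s)|\le c_1\kappa_r+c_2 R_{\max}\kappa_d$ for constants depending on $|\cS|,|\cA|,\gamma$. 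Because $\widehat R^\dagger$ and $\widehat P^{\dagger,\pi}$ are empirical means along the perceived trajectory $\tau^\dagger$, matching the generative-model estimator of \citep{gheshlaghi2013minimax,sidford2018near}, Hoeffding/Bernstein concentration gives $\kappa_r,\kappa_d=O(\sqrt{\log T/T})\to 0$ almost surely as $T\to\infty$, which closes the bound.

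\textbf{Part 2 (perception gap, decomposition).} For the lower bound I would express $V^\pi_{\cM^\dagger}(s)-V^\pi_\cM(s)$ in visitation form and partition $\cS\times\cA=\cB\sqcup\cB^c$. On $\cB$, Definition~\ref{def:blackswan-discrete} forces $P^{\dagger,\pi}(s,a)=0$ while $\epsilon^{\min}_{bs}\le P^\pi(s,a)<\epsilon_{bs}$, and Proposition~\ref{Prop1} places $R(s,a)\in[-R_{\max},-R_{bs}]$. Hence the black-swan contribution to $V^\pi_{\cM^\dagger}(s)-V^\pi_\cM(s)$ equals $-\sum_{\cB}P^\pi R$, which is strictly positive. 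In the continuous form of Definition~\ref{def:blackswan} this reads $\int_{-R_{\max}}^{-R_{bs}}(-r)F'(r)\,dr$ because $(w^-)'(F(r))=0$ on that interval, and a mean-value estimate using the density lower bound $F'(r)\gtrsim\epsilon^{\min}_{bs}/R_{\max}$ produces a lower bound of order $(R_{\max}-R_{bs})^2\epsilon^{\min}_{bs}/R_{\max}$. On $\cB^c$ the high-risk inequality fails so $|R-R^\dagger|\le C_{bs}$, while the mass $M_{\cB}\le|\cB|\epsilon_{bs}$ freed by $w^-$ is redistributed there, contributing an absolute amount at most of order $R_{bs}\epsilon_{bs}(R_{\max}-R_{bs})/R_{\max}$ plus a $C_{bs}$-scale reward-distortion term.

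\textbf{Combining and main obstacle.} Subtracting the worst-case $\cB^c$ contribution from the positive $\cB$ lower bound, then multiplying by the $C_{bs}/R_{\max}$ factor coming from the slope at which $u^-(r)-r$ first reaches $C_{bs}$ (cf.\ Proposition~\ref{Prop1}), yields the stated $\Omega$ form. The delicate part---and the main obstacle---is reproducing the exact prefactor $(R_{\max}-R_{bs})^2/R_{\max}^2$ together with the subtractive structure $(R_{\max}-R_{bs})\epsilon^{\min}_{bs}-R_{bs}\epsilon_{bs}$. One copy of $(R_{\max}-R_{bs})$ enters through the length of the black-swan reward interval (controlling either $|\cB|$ in the discrete case or the integrated density in the continuous case); the second enters through the depth of the loss via the mean-value estimate $-r\ge\Theta(R_{\max}-R_{bs})$ after recentering within $[-R_{\max},-R_{bs}]$; and the $1/R_{\max}^2$ factor comes from normalising the visitation measure and the reward range. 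Pairing these correctly requires showing that the $w^-$-redistribution of $M_{\cB}$ onto $\cB^c$ cannot \emph{beneficially} cancel the black-swan loss by more than $R_{bs}\epsilon_{bs}$, which uses monotonicity of $w^-$ together with Assumption~\ref{ass:Relative convexity}; once this is in place, the algebraic combination producing \eqref{eq:thm1_eq3} is routine.
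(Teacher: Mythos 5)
Your Part 1 is a plausible-looking sketch, but it quietly replaces the object actually being estimated: in the paper both $V^\pi_{\cM^\dagger}$ and $V^\pi_{\widehat{\cM}^\dagger}$ are CPT functionals, i.e.\ integrals of $w^\pm$ applied to (empirical) survival functions, not linear forms $\sum P^{\dagger,\pi}R^\dagger$ in a visitation measure. Because $w$ acts nonlinearly on cumulative probabilities, the simulation-lemma telescoping with sup-norm gaps $\kappa_r,\kappa_d$ does not directly bound $|V^\pi_{\widehat{\cM}^\dagger}-V^\pi_{\cM^\dagger}|$; what is needed (and what the paper does) is uniform convergence of the empirical CDF of the distorted rewards together with Lipschitz control of $w^\pm$ — concretely, Hoeffding-type bounds on the distorted order statistics plus Borel--Cantelli, and in a second pass the DKW inequality with the Lipschitz constants of $w^\pm$. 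Your route could be repaired along those lines, but as written it proves convergence of a different estimator.

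The genuine gap is in \eqref{eq:thm1_eq3}, and it is exactly the step you flag as the ``main obstacle.'' The paper does not partition $\cS\times\cA$ into $\cB$ and $\cB^c$ and track redistributed mass against the raw expectation; it compares the CPT value under $(u^-,w^-)$ with the value under the \emph{safe} perception $(u^-_\star,w^-_\star)$ and splits the gap into a $u$-distortion term and a $w$-distortion term. The multiplicative $C_{bs}$ and one factor of $(R_{\max}-R_{bs})$ then come from the pointwise bound $|u^-(-R_{bs})-u^-_\star(-R_{bs})|\geq (R_{\max}-R_{bs})C_{bs}/R_{\max}$, obtained from convexity of $u^-_\star$, the constraint $u^-_\star(-R_{\max})\leq -R_{\max}+C_{bs}$, and the definition of $-R_{bs}$ as the intersection of $u^-$ with $r+C_{bs}$ (Proposition \ref{Prop1}, Assumption \ref{ass:Relative convexity}) — not from the ``depth of the loss''; your claim that $-r\geq\Theta(R_{\max}-R_{bs})$ on the black-swan interval is false, since the depth there is only bounded below by $R_{bs}$. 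The subtractive structure $(R_{\max}-R_{bs})\epsilon^{\min}_{bs}-R_{bs}\epsilon_{bs}$ arises by splitting $\E[u^-_\star(\cR)-u^-(\cR)]$ at $-R_{bs}$, lower-bounding the tail mass by $\tfrac{R_{\max}-R_{bs}}{R_{\max}}\epsilon^{\min}_{bs}$ and upper-bounding the complementary mass by $\tfrac{R_{bs}}{R_{\max}}\epsilon_{bs}$, both weighted by the same utility gap at $-R_{bs}$; and the $w$-distortion term (your ``redistribution'') is controlled by noting the survival function equals $1$ on the black-swan interval (so $w^-$ and $w^-_\star$ agree there) and using a Lipschitz bound on $w^-$ to show it is at most a constant fraction $C\in[0,1]$ of the $u$-term. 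Your proposal supplies none of these mechanisms: in your raw-expectation comparison the $\cB$-contribution $-\sum_{\cB}P^\pi R$ contains no $C_{bs}$ at all, the ad hoc multiplication by ``the $C_{bs}/R_{\max}$ factor coming from the slope'' is unjustified, and you give no quantitative argument that the $\cB^c$ terms cannot cancel the $\cB$ term beyond $R_{bs}\epsilon_{bs}$. Hence the lower bound \eqref{eq:thm1_eq3} is not established by your argument.
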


There are two key consequences of Theorem \ref{thm1}. First, Equation \eqref{eq:thm1_eq1} demonstrates that the value estimation error converges to zero as the agent rolls out longer trajectories. However, Equation \eqref{eq:thm1_eq3} reveals that the value perception gap has a non-zero lower bound, regardless of the horizon length. Equation \eqref{eq:thm1_eq3} further indicates that if \(u^-(x) \to u^-_\star(x)\) and \(w^-(x) \to w^-_\star(x)\), then \(R_{bs} \to R_{\max}\) and \(\epsilon_{bs} \to 0\) (see Figures \ref{fig:Utility function2} and \ref{fig:Weight function2}), leading to the convergence of this lower bound to zero. Second, Equation \eqref{eq:thm1_eq3} aligns with the intuition that greater distortion in reward perception (i.e., larger \(C_{bs}\)) and an increased number of \ours (i.e., larger \((R_{\max} - R_{bs})\)) coupled with a higher minimum probability of \ours occurrence (i.e., larger \(\epsilon^{\min}_{bs}\)) result in a higher lower bound. Therefore, Theorem \ref{thm1} concludes that even with zero estimation error, a lower bound on approximating the true value function remains, and this lower bound increases as \(C_{bs}\) and \(\epsilon^{\min}_{bs}\) become more pronounced.

Then, the next natural question is \emph{how to decrease that lower bound}, specifically, how can an agent can learn to self-correct toward a safe perception, i.e., \(u^- \to u^-_\star\) and \(w^- \to w^-_\star\). This question can be further refined to: \emph{What is the probability of encountering \ours if the agent takes $t$ steps?} We address this under the assumption of non-zero one-step reachability, as follows.

\begin{theorem}[\ours hitting time]
    Assume \(\mathbb{P}_{\pi^\star}(s^\prime \mid s) > 0\) for any \(s, s^\prime \in \mathcal{S}\), indicating that the one-step state reachability equipped with optimal policy is non-zero, and consider that one step corresponds to a unit time. Then, if the agent takes \(t\) steps such that
    $
    t \geq \log\left(\frac{\delta}{p_{\min}}\right) / \log(1-p_{\max}) + 1,
    $
    where \(p_{\min} = \frac{R_{\max} - R_{bs}}{2R_{\max}} \epsilon^{\min}_{bs}\) and \(p_{\max} = \frac{R_{\max} - R_{bs}}{2R_{\max}} \epsilon_{bs}\), it will encounter \ours with at least probability \(\delta \in (0,1]\).
    \label{thm2}
\end{theorem}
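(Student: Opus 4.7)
The plan is to reduce the hitting-time question to a Bernoulli survival analysis by sandwiching the per-step probability of landing on any \ours between $p_{\min}$ and $p_{\max}$, and then inverting the resulting geometric tail. First I would read off the two multiplicative constants: $\frac{R_{\max}-R_{bs}}{2R_{\max}}$ is the Lebesgue measure of the \ours reward band $[-R_{\max},-R_{bs}]$ identified by Proposition~\ref{Prop1}, normalized by the full reward range $[-R_{\max},R_{\max}]$, while $\epsilon^{\min}_{bs}$ and $\epsilon_{bs}$ are the per-event visitation lower and upper bounds from Definition~\ref{def:blackswan}. Combining these gives the sandwich $p_t \in [p_{\min}, p_{\max}]$, where $p_t$ is the conditional probability that the trajectory lands in $\mathcal{B}$ at step $t$. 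The reachability hypothesis $\mathbb{P}_{\pi^\star}(s' \mid s) > 0$ is precisely what ensures this lower envelope holds uniformly in the current state, since every \ours pair is then accessible in one step under $\pi^\star$.

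Second, with the sandwich in hand, I would analyse the first-hit time $\tau$ as a generalized geometric random variable. Chaining the per-step bounds gives the first-passage lower bound
\[
\mathbb{P}(\tau = t) \;\geq\; p_{\min}(1-p_{\max})^{t-1},
\]
obtained by lower-bounding the survival probability through the first $t-1$ steps by $(1-p_{\max})^{t-1}$ and the hitting probability at step $t$ by $p_{\min}$. Setting this quantity to $\delta$ and taking logarithms, while carefully tracking the sign flip coming from $\log(1-p_{\max}) < 0$, yields the threshold $t - 1 \geq \log(\delta/p_{\min}) / \log(1-p_{\max})$ stated in the theorem, from which the conclusion follows.

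The main obstacle will be rigorously translating the reward-space characterisation of \ours into a state-action visitation probability that takes exactly the product form $p_{\min} = \frac{R_{\max}-R_{bs}}{2R_{\max}} \cdot \epsilon^{\min}_{bs}$. The cleanest route is to invoke the bijectivity of $R$ assumed in Definition~\ref{def:blackswan} to push the normalised Lebesgue measure on $[-R_{\max},-R_{bs}]$ back onto the set $\mathcal{B}$, then bound the total visitation mass on $\mathcal{B}$ using the per-event lower bound $\epsilon^{\min}_{bs}$ (and symmetrically $\epsilon_{bs}$ for the upper envelope). A secondary subtlety is that the per-step probabilities are genuinely history-dependent under a Markov policy; the coupling argument must therefore be carried out conditionally at each step, and the uniform positivity supplied by $\mathbb{P}_{\pi^\star}(s' \mid s) > 0$ is exactly what keeps the simple product form $(1-p_{\max})^{t-1}$ intact across time.
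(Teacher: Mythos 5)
Your proposal follows essentially the same route as the paper's own proof: both sandwich the per-step probability of landing in the black swan reward band $[-R_{\max},-R_{bs}]$ between $p_{\min}$ and $p_{\max}$ (using bijectivity of $R$ to identify visitation mass with the reward distribution $\P_r$), bound the first-passage probability by $p_{\min}(1-p_{\max})^{t-1}$, and invert by taking logarithms to obtain the stated threshold. The only difference is presentational: you propose carrying the per-step sandwich out conditionally using the reachability assumption, whereas the paper passes to the stationary occupancy measure $d_{\pi_\star}$ and treats the rewards along the trajectory as i.i.d. draws from $\P_r$; the resulting bound and algebra are identical.
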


A key takeaway of Theorem \ref{thm2} is determining how often a human should correct their internal perception. A large perception gap ($R_{\max} - R_{bs}$) and frequent occurrence of black swan events ($\epsilon^{\min}_{bs}$) require more frequent execution of the self-perception correction algorithm.
\section{Related works: Necessity of \ours}
This section discusses safe reinforcement learning (RL) algorithms, emphasizing the limitations of existing approaches in addressing black swan events and highlighting the need for a new perspective\footnote{Further details are in Appendix \ref{sec:Related works}, along with a discussion of CPT's application in risk analysis in Appendix \ref{sec:Cumulative Prospect Theorem and Risk}.}.

Safe RL algorithms are generally classified into three approaches: worst-case criterion, risk-sensitive criterion, and constrained criterion \citep{garcia2015comprehensive}. However, these approaches face significant limitations when dealing with black swan events. The worst-case criterion, which optimizes policy performance under the least favorable scenarios by maximizing the minimum return, becomes overly conservative when black swan events are considered, as they expand the uncertainty set \(\mathcal{W}\), leading to impractical decisions such as avoiding all risky activities or adopting extreme safety measures \citep{heger1994consideration,coraluppi1997optimal,coraluppi1999risk,coraluppi2000mixed}. Similarly, risk-sensitive algorithms, which incorporate a sensitivity factor to balance return maximization and risk management \citep{howard1972risk,Chung1987DiscountedMD,patek2001terminating}, are inadequate for handling black swan events because return variance, a commonly used risk measure, fails to account for the fat tails in distributions \citep{huisman1998var,bradley2003financial,bubeck2013bandits,agrawal2021regret}. Additionally, log-exponential utility functions, often associated with robust MDPs, do not effectively address the risks posed by black swans \citep{osogami2012robustness,moldovan2012risk,leqi2019human}. The constrained criterion, which maximizes expected returns while meeting multiple utility constraints such as return variance or minimum thresholds \citep{geibel2006reinforcement,delage2010percentile,ponda2013risk,di2012policy}, also faces challenges with black swan events. These events complicate threshold selection, often necessitating more conservative policies, and suggest that constraints should be redefined to focus on state and action-specific risks rather than overall returns \citep{bagnell2001solving,iyengar2005robust,nilim2005robust,wiesemann2013robust,xu2010distributionally}. Furthermore, distributional RL is vulnerable to black swans, as extreme outliers in the reward distribution slow the convergence of the Bellman operator and provide a large suboptimality gap due to biased return expectations \citep{bellemare2017distributional}.

In summary, traditional risk criteria in RL are insufficient for managing the unique risks associated with black swan events, highlighting the need for novel approaches.

\section{Conclusion}
In conclusion, this paper redefines black swan events by introducing \ours, highlighting that such high-risk, rare events can occur even in unchanging environments due to human misperception. We categorized and mathematically formalized these events, aiming to guide the development of algorithms that correct human perception to prevent such occurrences. This work opens the door for future research to enhance decision-making systems and reduce the impact of black swan events.

\subsubsection*{Acknowledgments}
We would like to express our heartfelt gratitude to Jason Jangho Choi for his thoughtful and instructive insights on the early draft of this paper. Special thanks to Donghao Ying for the enriching discussions on mathematical notation. Our sincere appreciation goes to Theophane Weber, Csaba Szepesvari, and the rest of Google DeepMind team for their careful review of the first draft. We are also grateful to John D Martin from the RLC 2024 conference for the engaging discussions on algorithm design to prevent black swan events.
We would also like to thank Negar Mehr, Mark Muller, Panayiotis Papadopoulos, and all the participants of the Berkeley control seminar for their invaluable and insightful contributions during our discussions of the second draft (ICLR submitted paper).
Lastly, we extend our warmest thanks to the BAIR researchers at UC Berkeley, including Suhong Moon for the kind invitation to speak in John Canny's lab, and Seohong Park for providing such thorough and constructive comments on the second draft.

\bibliography{iclr2025_conference}
\bibliographystyle{iclr2025_conference}
\newpage
\appendix
\section{Notations}
\label{sec:Notations}
This section provides a table summarizing all the notations and their meanings introduced in the main paper.

\begin{tabular}{||c|c|c||}

\hline
\rule{0pt}{3.0ex} \textbf{Notation} & \textbf{Meaning} & \textbf{Defintion} \\ \hline
\rule{0pt}{2.5ex} $\mathcal{M}$ & Ground truth MDP & Section \ref{sec:Preliminary} \\ \hline
\rule{0pt}{2.5ex} $\mathcal{S}$ & State space & Section \ref{sec:Preliminary}\\ \hline
\rule{0pt}{2.5ex} $\mathcal{A}$ & Action space & Section \ref{sec:Preliminary}\\ \hline
\rule{0pt}{2.5ex} $T$ & Time horizon & Section \ref{sec:Preliminary}\\ \hline
\rule{0pt}{2.5ex} $P$ & Transition probability function of $\cM$ & Section \ref{sec:Preliminary}\\ \hline
\rule{0pt}{2.5ex} $R$ & Reward function of $\cM$ & Section \ref{sec:Preliminary}\\ \hline
\rule{0pt}{2.5ex} $\gamma$ & Discount factor & Section \ref{sec:Preliminary}\\ \hline
\rule{0pt}{2.5ex} $\pi$ & Policy & Section \ref{sec:Preliminary} \\ \hline 
\rule{0pt}{2.5ex} $P^\pi$ & Normalized visitation probability of $\cM$ & Section \ref{sec:Preliminary}  \\ \hline
\rule{0pt}{3.0ex} $V^\pi_\cM $ & Value function of $\cM$ & Section \ref{sec:Preliminary} \\ \hline
\rule{0pt}{2.5ex}$u,u^-,u^+$ & Value distortion function & Section \ref{sec:Preliminary} (Figure \ref{fig:Utility function}) \\ \hline
\rule{0pt}{2.5ex}$w,w^-,w^+$ & Probability distorction function & Section \ref{sec:Preliminary} (Figure \ref{fig:Weight function})  \\ \hline
\rule{0pt}{2.5ex}$\cM_d$ & Distorted MDP & Section \ref{sec:The Emergence of s-blackswans in Sequential Decision Making}  \\ \hline
\rule{0pt}{2.5ex}$w(P)$ & Transition probability of $\cM_d$ & Section \ref{sec:The Emergence of s-blackswans in Sequential Decision Making}  \\ \hline
\rule{0pt}{2.5ex}$u(R)$ & Reward function of $\cM_d$ & Section \ref{sec:The Emergence of s-blackswans in Sequential Decision Making}  \\ \hline
\rule{0pt}{2.5ex}$\cM^\dagger$ & Human MDP (HMDP) & Subsection \ref{subsec:Human MDP} \\ \hline
\rule{0pt}{2.5ex}$P^\dagger$ & Transition probability of $\cM^\dagger$ & Subsection \ref{subsec:Human MDP}\\ \hline
\rule{0pt}{2.5ex}$R^\dagger$ & Reward function of $\cM^\dagger$ & Subsection \ref{subsec:Human MDP}\\ \hline
\rule{0pt}{2.5ex}$P^{\dagger,\pi}$ & Normalized visitation probability of $\cM^\dagger$ & Subsection \ref{subsec:Human MDP}\\ \hline
\rule{0pt}{2.5ex}$\int P^{\pi}$ & Cumulative visitation distribution of $\cM$ & Subsection \ref{subsec:Human MDP}\\ \hline
\rule{0pt}{2.5ex}$\int P^{\dagger, \pi}$ & Cumulative visitation distribution of $\cM^\dagger$ & Subsection \ref{subsec:Human MDP}\\ \hline
\rule{0pt}{3.0ex} $V^\pi_{\cM^\dagger} $ & Value function of $\cM^\dagger$ & Subsection \ref{subsec:Human MDP}  \\ \hline
\rule{0pt}{2.5ex} $\epsilon_r,\epsilon_d$ & Perception gap & Subsection \ref{subsec:Human MDP}  \\ \hline
\rule{0pt}{2.5ex}$\widehat{\cM}^\dagger$ & Human Estimation MDP (HEMDP) & Subsection \ref{subsec:Human-Estimation MDP}\\ \hline
\rule{0pt}{2.5ex}$\widehat{P}^\dagger$ & Transition probability of $\widehat{\cM}^\dagger$ & Subsection \ref{subsec:Human-Estimation MDP}\\ \hline
\rule{0pt}{2.5ex}$\widehat{R}^\dagger$ & Reward function of $\widehat{\cM}^\dagger$ & Subsection \ref{subsec:Human-Estimation MDP}\\ \hline
\rule{0pt}{3.0ex} $V^\pi_{\widehat{\cM}^\dagger} $ & Value function of $\widehat{\cM}^\dagger$ & Subsection \ref{subsec:Human-Estimation MDP}  \\ \hline
\rule{0pt}{2.5ex} $\kappa_r,\kappa_d$ & Estimation gap & Subsection \ref{subsec:Human-Estimation MDP}  \\ \hline
\rule{0pt}{2.5ex} $R_{[\cdot]},P^\pi_{[\cdot]}$ & Order statistics of reward and visitation probability & Section \ref{sec:s-blackswan}  \\ \hline
\rule{0pt}{2.5ex} $\P_r$ & Probability of reward & Section \ref{sec:s-blackswan}  \\ \hline
\rule{0pt}{2.5ex} $F(r)$ & Cumulative distribution function of $\P_r$ & Section \ref{sec:s-blackswan}  \\ \hline
\rule{0pt}{2.5ex} $\cB$ & Collection of all \ours & Section \ref{sec:s-blackswan}  \\ \hline
\end{tabular}

\begin{tabular}{||c|c|c||}
\hline
\rule{0pt}{3.0ex} \textbf{Notation} & \textbf{Meaning} & \textbf{Defintion} \\ \hline
\rule{0pt}{2.5ex} $C_{bs},\epsilon_{bs}$ & The extent of distortion of functions $u$ and $w$ & Section \ref{sec:s-blackswan} (Figures \ref{fig:Utility function2} and \ref{fig:Weight function2})  \\ \hline
\rule{0pt}{2.5ex} $\epsilon^{\min}_{bs}$ & Minimum probability of \ours & Section \ref{sec:s-blackswan}  \\ \hline
\rule{0pt}{2.5ex} $u^-_\star$ & $u^-$ that satisfies $\cB=\emptyset$, i.e. safe reward perception  & Section \ref{sec:s-blackswan}  \\ \hline
\rule{0pt}{2.5ex} $w^-_\star$ & $w^-$ that satisfies $\cB=\emptyset$, i.e. safe probability perception  & Section \ref{sec:s-blackswan}  \\ \hline
\end{tabular}

\section{Supporting Evidence}
\label{Supporting Evidence}

This section provides supporting evidence for this paper's main perspective; how interpreting black swans through the lens of human misperception is meaningful and applicable in the real-world scenarios beside Lehman Brothers Bankruptcy case introduced in Introduction.

\paragraph{Case1. Unexpected Drowning of NASA Astronauts Due to Overlooked Details}
(The following is a summary of the case; please refer to \citep{clark2021overview,brinkley2019american} for a more detailed account.) Before launching rockets, NASA conducted tests on its space suits using high-altitude hot-air balloons. On May 4, 1961, Victor Prather and another pilot ascended to 113,720 feet to evaluate the suit's performance. While the test itself was successful, an unforeseen risk led to tragedy during the planned ocean landing. Prather opened his helmet faceplate to breathe fresh air, and as he slipped into the water while attaching to a rescue line, his now-exposed suit filled with water. Despite NASA’s rigorous planning and preparation, the risk of opening the faceplate - perceived as an extremely minor detail - was underestimated, resulting in catastrophic consequences. This highlights how rigorously meticulous planning can still fail to account for overlooked events.

\paragraph{Case2. Healthcare}

Diabetes patients typically experience a highly chronic condition, making their state relatively predictable a few hours into the future (stationary environment). However, rare hypoglycemic events, characterized by a sudden and dangerous drop in blood sugar, pose significant risks. To address this, \cite{wang2023optimized} developed an RL model capable of predicting changes in a patient's condition and providing optimized treatments. Furthermore, \citep{panda2020feature,ambhika2024enhancing} emphasize the critical importance of selecting appropriate signals as inputs, as human misperceptions about what constitutes important signals can lead to unexpected and suboptimal decisions. As a result, traditional supervised learning methods, such as general Transformers paired with loss functions like MSE, may struggle to accurately predict rare events like hypoglycemic episodes when the input signals fail to align with the underlying dynamics.
\section{Misperception is information loss}
\label{sec:Misperception is information loss}
Based on Hypothesis \ref{hyp1}, this prompts us to investigate the concept of \emph{misperception}. Initially, we must clearly define what constitutes \emph{perception}. In \emph{The Quest for a Common Model of the Intelligent Decision Maker}, Sutton defines perception as one of four principal components of agents, stating: ``The perception component processes the stream of observations and actions to produce the subjective state, a summary of the agent-world interaction so far that is useful for selecting action (the reactive policy), for predicting future reward (the value function), and for predicting future subjective states (the transition model)" \citep{sutton2022quest}. This definition leads us to consider misperception as the \emph{information loss} occurring when processing observations into the subjective state, such that the reward and transition model are not equivalent to those from the environment. The interpretation of misperception as \emph{information loss during processing} is somewhat ambiguous, depending on how the boundary between the agent and the environment is defined. Turing first proposed the concept of a boundary between the agent and environment as a `skin of an onion' \citep{turing2009computing}, and later, \citet{jiang2019value} suggested that algorithms are not boundary-invariant. 

Therefore, we propose a new agent-environment framework that incorporates the notion that \emph{misperception is the information loss from an agent's processing}. This framework positions perception at the intersection between the agent and the environment. We provide a detailed description of our agent-environment framework in Figure \ref{fig:perception}.
\section{Related works: Necessity of a new perspective to understand black swans and evidence for Hypothesis \ref{hyp1}}
\label{sec:Related works}
In this section, we focus not only on addressing the necessity of a new perspective to understand black swan events but also on providing evidence for the proposed perspective of black swan origin (Hypothesis \ref{hyp1}). This is concretized by examining the following two questions. First, in Subsection \ref{subsec:Decision Making Under Risk}, we discuss the insufficiency of existing decision-making rules under risk by exploring related works, which support the need for a new perspective to understand black swans. Specifically, we address \emph{why existing safe reinforcement learning strategies for solving Markov Decision Processes are insufficient to handle black swan events?}. If this premise is validated, then in Subsection \ref{subsec:How irrationality relates with spatial blackswan}, we elaborate on the motivation and related works that support our informal hypothesis of black swan origin (Hypothesis \ref{hyp1}). Specifically, we explore \emph{how irrationality relates to misperception and how irrationality could bring about black swan events}.

\subsection{Decision Making Under Risk}
\label{subsec:Decision Making Under Risk}
Based on the comprehensive survey on safe reinforcement learning in \citet{garcia2015comprehensive}, the algorithms can be classified into threefold: worst case criterion, risk-sensitive criterion and constraint criterion. We elaborate on why the existence of black swans in the environment renders these three approaches insufficient.

\textbf{Worst case criterion.}
Learning algorithms of the worst case criterion focus on devising a control policy that maximizes policy performance under the least favorable scenario encountered during the learning process, defined as \(\max_{\pi \in \Pi} \min_{w \in \cW} V_\cM^\pi(s ; w)\), where \(\cW\) represents the set of uncertainties. This criterion can be categorized based on whether \(\cW\) is defined in the environment or in the estimation of the model. The presence of black swan events in the worst case, where \(\cW\) represents aleatoric uncertainty of the environment \citep{heger1994consideration,coraluppi1997optimal,coraluppi1999risk,coraluppi2000mixed}, results in overly conservative, and thus potentially ineffective, policies. This occurs because the significant impact of black swan events inflates the size of \(\cW\), even though such events are rare. In practical terms, this could manifest itself as abstaining from any economic activity ($\pi$), such as not investing in stocks or not depositing a check against future potential bankruptcies (\( \min_{w \in \cW} V_\cM^\pi(s ; w) \)) in order to maximize its income ($\max_{\pi \in \Pi} (\triangle)$), or maintaining constant health precautions such as wearing mask or maintaining distance with groups ($\pi$) to prepare for a possible pandemic ($\triangle = \min_{w \in \cW} V_\cM^\pi(s ; w) $) in order to maintain its health ($\max_{\pi \in \Pi}$). Similarly, when \(\cW\) encompasses the uncertainty of the model parameter \citep{bagnell2001solving,iyengar2005robust,nilim2005robust,wiesemann2013robust,xu2010distributionally} - as seen in robust MDP or distributionally robust MDP - this aligns closely with our black swan hypothesis, where misperception of the world model is similar to uncertainty in model estimation. However, the need to accommodate black swan events requires enlarging the possible set of models (\(|\cW|\)), leading to extremely conservative policies. This can be likened to performing an overly pessimistic portfolio optimization ($\pi$), where every bank is assumed to have a minimal but possible risk of bankruptcy $(\min_{w in \cW} V_\cM^\pi(s ; w) )$, thus influencing asset allocation strategies (\(\max_{\pi \in \Pi} \min_{w \in \cW} V_\cM^\pi(s ; w) \)) to be extremely conservative in asset investing.

\textbf{Risk sensitive criterion. }
Risk-sensitive algorithms strike a balance between maximizing reinforcement and mitigating risk events by incorporating a sensitivity factor \(\beta < 0\) \citep{howard1972risk,Chung1987DiscountedMD,patek2001terminating}. These algorithms optimize an alternative value function \(V^\pi_\cM(s) = \beta^{-1} \log \mathbb{E}_\pi [\exp^{\beta G} | P, s_0=s]\), where \(\beta\) controls the desired level of risk and $G := \sum_{t=0}^T \gamma^t R(s_t,a_t)$ is a cumulative return. However, it is recognized that associating risk with the variance of the return is practical, as in \(V^\pi_\cM(s) = \beta^{-1} \log \mathbb{E}_\pi [\exp^{\beta G}] = \max_{\pi \in \Pi} \mathbb{E}_\pi[G] + \frac{\beta}{2}\text{var}(G) + \mathcal{O}(\beta^2)\), and the existence of black swan events does not significantly affect the returns of variance $(\text{var}(G))$ due to their rare nature. It should be noted that risk-sensitive approaches are not well suited for handling black swan events, as the same policy performance with small variance can entail substantial risks \citep{geibel2005risk}. More generally, the objective of the exponential utility function is one example of risk-sensitive learning based on a trade-off between return and risk, i.e., \(\max_{\pi \in \Pi} (\mathbb{E}_\pi [G] - \beta w)\) \citep{zhang2018portfolio}, where \(w\) is replaced by \(\text{Var}(G)\). This approach is known in the literature as the variance-penalized criterion \citep{gosavi2009reinforcement}, the expected value-variance criterion \citep{taha2007operations,heger1994consideration}, and the expected-value-minus-variance criterion \citep{geibel2005risk}. However, a fundamental limitation of using return variance as a risk measure is that it does not account for the fat tails of the distribution \citep{huisman1998var,bradley2003financial,bubeck2013bandits,agrawal2021regret}. Consequently, risk can be underestimated due to the oversight of low probability but highly severe events (black swans).

Furthermore, a critical question arises regarding whether the log-exponential function belongs to \emph{appropriate utility function class} for defining \emph{real-world risk}. Risk-sensitive MDPs have been shown to be equivalent to robust MDPs that focus on maximizing the worst-case criterion, indicating that the log-exponential utility function may not be beneficial in the presence of black swans \citep{osogami2012robustness,moldovan2012risk,leqi2019human}. This issue was first raised by \citet{leqi2019human} and led to the proposal of a more realistic risk definition called `Human-aligned risk', which also incorporates human misperception akin to our informal black swan hypothesis (Hypothesis \ref{hyp1}).

\textbf{Constrained Criterion. } The constrained criterion is applied in the literature to constrained Markov processes where the goal is to maximize the expected return while maintaining other types of expected utilities below certain thresholds. This can be formulated as $\max_{\pi \in \Pi} \mathbb{E}_\pi [G]$ subject to $N$ multiple constraints $h_i(G) \leq \alpha_i$, for $i \in [N]$, where $h_i : \mathbb{R} \to \mathbb{R}$ is a function of return $G: =\sum_{t=0}^{T} \gamma^t R(s_t,a_t)$ \citep{geibel2006reinforcement}. Typical constraints include ensuring the expectation of return exceeds a specific minimum threshold ($\alpha$), such as $\mathbb{E}_\pi [G] \geq \alpha$, or softening these hard constraints by allowing a permissible probability of violation ($\epsilon$), such as $\mathbb{P} (\mathbb{E}_\pi[G] \geq \alpha) \geq 1-\epsilon$, known as chance-constraint (\cite{delage2010percentile,ponda2013risk}). Constraints might also limit the return variance, such as $\text{Var}(G) \leq \alpha$ (\cite{di2012policy}). However, the presence of black swans highlights one of the challenges with the Constrained Criterion, specifically the appropriate selection of $\alpha$. The presence of black swans necessitates a lower $\alpha$, which in turn leads to more conservative policies. Furthermore, a black swan event is determined at least by the environment's state and its action, rather than its full return. Therefore, constraints should be redefined over more fine-grained inputs—not merely returns, but in terms of state and action—which leads to our definition of black swan dimensions (Definition \ref{def:blackswandimension}).

\subsection{How irrationality relates with spatial black swans.}
\label{subsec:How irrationality relates with spatial blackswan}
Before starting Subsection \ref{subsec:How irrationality relates with spatial blackswan}, we clarify that the term \emph{irrationality} is used here to denote rational behavior based on a false belief. In this subsection, we first review existing work on the four rational axioms and then claim how two of these axioms should be modified to account for \emph{irrationality} in human decision-making.

\textbf{Rationality in decision making. } In the foundation of decision theory, rationality is understood as internal consistency (\cite{sugden1991rational,savage1972foundations}). A prerequisite for achieving rationality in decision-making is the ability to compare outcomes, denoted as set $\Omega$ where $|\Omega|=N$, through a \emph{preference} relation in a \emph{rational} manner. In \cite{neummann1944theory}, it is demonstrated that preferences, combined with \emph{rationality axioms} and probabilities for possible outcomes, denoted as $p_i$ which is a probability of outcome $o_i \in \Omega$, imply the existence of utility values for those outcomes that express a preference relation as the expectation of a scalar-valued function of outcomes. Define the choice (or lotteries) as set $\cL$, which is a combination of selecting total $N$ outcomes, that is, $\sum_{i=1}^N p_i o_i$. The essential rationality axioms are as follows.
\begin{enumerate}
\item Completeness: Given two choices, either one is preferred over the other or they are considered equally preferable.
    \item Transitivity: If $A$ is preferred to $B$ and $B$ is preferred to $C$, then $A$ must be preferred to $C$.
    \item Independence: If $A$ is preferred to $B$, and a event probability $p \in [0,1]$, then $pA + (1-p)C$ should be preferred to $pB + (1-p)C$.
    \item Continuity: If $A$ is preferred to $B$ and $B$ is preferred to $C$, there exists a event probability $p \in [0,1]$ such that $B$ is considered equally preferable to $pA + (1-p)C$.
\end{enumerate}
Expanding on these axioms, \cite{sunehag2015rationality} extends rational choice theory to encompass the full reinforcement learning problem, further axiomatizing the concept in \cite{sunehag2011axioms} to establish a rational reinforcement learning framework that facilitates optimism, crucial for systematic explorative behavior. Subsequent studies focusing on defining rationality in reinforcement learning, such as \cite{shakerinava2022utility,bowling2023settling}, concentrate on the axioms of assigning utilities to all finite trajectories of a Markov Decision Process. Specifically, \cite{shakerinava2022utility,bowling2023settling} clarify the reward hypothesis \cite{suttonreward} that underpins the design of rational agents by introducing an additional axiom to existing rationality axioms. Furthermore, \cite{pitis2024consistent} explores the design of multi-objective rational agents, and \cite{carr2024conditions} explores and defines rational feedback in Large Language Models (LLMs) by investigating the existence of optimal policies within a framework of learning from rational preference feedback (LRPF).

\textbf{Irrationality due to subjective probability. } The definition of irrationality and its origins has been extensively investigated through case studies in various fields such as psychology, education, and particularly economics. \cite{simon1993decision} defined irrationality as being poorly adapted to human goals, diverging from the norm of human's object, influenced by emotional or psychological factors in decision-making. Subsequently, \cite{DeMartino2006FramesBA,gilovich2002heuristics} further concretized what exactly these \emph{emotional or psychological factors} entail by describing them as information loss during human perception of the real world. More specifically, \cite{DeMartino2006FramesBA} pointed out that in a world filled with symbolic artifacts, where optimal decision-making often requires skills of abstraction and decontextualization, such mechanisms may render human choices irrational. Further studies, such as \cite{opaluch1989rational}, scrutinize more deeply and classify the \emph{irrationality} of human behavior into five factors: subjective probability, regret/disappointment, reference points, complexity, and ambivalence.

In this paper, we focus on the \emph{subjective probability} factor to elucidate the relationship between irrationality and spatial black swans. \cite{opaluch1989rational} explores subjective probabilities as an early modification to the expected utility model from \cite{neummann1944theory}, focusing on decision-makers who rely on \emph{personal beliefs} about probabilities rather than objective truths. This minor conceptual shift can lead to significant behavioral changes due to the imperfect information and processing abilities of individuals. Especially, \cite{opaluch1989rational} highlights the difficulty in accurately estimating the probability of \emph{rare events} - such as black swans - which often leads to critical errors in judgment. These errors occur because rare events provide insufficient data for accurate probability estimation or are misunderstood due to their infrequency, leading to perceptions that such events are either less likely or virtually impossible. This misperception is exemplified in various scenarios, such as:
\begin{enumerate}
    \item An individual working in a dangerous job who has never personally observed an accident may underestimate the probability of an accident occurring \cite{drakopoulos2016workers,pandit2019impact}.
    \item Media coverage of events such as plane crashes may cause an overestimation of the probability of a crash, since the public is aware of all crashes but not of all safe trips \cite{wahlberg2000risk,vasterman2005role,van2022news}.
    \item The popularity of purchasing lottery tickets may be explainable in terms of people's inability to comprehend the true probability of winning, influenced instead by news accounts of `real' people who win multi-million dollar prizes (\cite{rogers1998cognitive,wheeler2007review,BetterUpAvailabilityHeuristic}).
\end{enumerate}

% \subsection{How Hypothesis \ref{hyp:blackswanInformal} helps existing algorithms to handle black swans. }

% In subsection \ref{subsec:Decision Making Under Risk}, we have investigated the insufficiency of existing previous safe reinforcement learning algorithms to handle blackswan events, then in subsection \ref{subsec:How irrationality relates with spatial blackswan}, we elaborate on our new perspective, addressing blackswan origin by subjective perspective, with examples. Now, most importantly, we would slightly touch down how our new perspective (Hypothesis \ref{hyp:blackswanInformal}) helps algorithm design. Since \emph{how to design specific algorithm incorporing Hypoethsis \ref{hyp:blackswanInformal}} is not the main focus of this work, rather its a future work, so we provide a general idea in his subsection. 

\section{Cumulative Prospect Theorem and Risk}
\label{sec:Cumulative Prospect Theorem and Risk}
We note that existing works on incorporating cumulative prospect theory (CPT) into reinforcement learning, such as (\cite{prashanth2016cumulative, jie2018stochastic, danis2023multi}), primarily focus on estimating the CPT-based value function and optimizing it to derive an optimal policy. Specifically, (\cite{prashanth2016cumulative, jie2018stochastic}) demonstrate how to estimate the CPT value function using the Simultaneous Perturbation Stochastic Approximation method and how to compute its gradient for policy optimization algorithms. Additionally, (\cite{shen2014risk, ratliff2019inverse}) proposed a novel Q-learning algorithm that applies a utility function to Temporal Difference (TD) errors and demonstrated its convergence. However, these studies (\cite{prashanth2016cumulative, jie2018stochastic, danis2023multi, shen2014risk, ratliff2019inverse}) do not focus on learning the utility and weight functions, \(u\) and \(w\), but rather assume these as simple functions and focus on how to \emph{estimate} these functions.

However, this study aims to elucidate the mechanisms by which black swan events arise from the discrepancies between $\mathcal{M}^\dagger$ and $\mathcal{M}$, despite the agent having perfect estimation, i.e., $\kappa_r=0, \kappa_p=0$. As future work, concentrating on devising strategies to \emph{reweight} the functions $u^+, u^-$, and $w$ to mitigate the divergence between the Human MDP $\mathcal{M}^\dagger$ and the ground truth MDP $\mathcal{M}$ is suggested as a way to achieve antifragility.
\section{Preliminary for Proofs}
This subsection covers the preliminary concepts necessary for proving the theorems and lemmas presented in the paper.

First, in a discrete state and action space, the value function $\cM$ could be expressed as an inner product of reward function $R$ and normalized occupancy measure $P^\pi$ as follows,
\begin{equation}
    V_\cM(s_0) =  \frac{1-\gamma^T}{1-\gamma} \sum_{(s,a) \in \cS \times \cA} R(s,a) P^\pi(s,a)
    \label{def:valuefunction_universeMDP2}
\end{equation}
Based on Equations \eqref{def:valuefunction_universeMDP2}, \eqref{eq:visitationdistortion}, and \eqref{eq:rewarddistortion}, the \emph{CPT} distorts the reward and its visitation probability as follows,
\begin{equation}
    V_{\cM^\dagger}(s_0) = \frac{1-\gamma^T}{1-\gamma} \sum_{s,a \in \cS \times \cA} u(R(s,a)) \frac{d}{ds da}w \left(\int P^\pi(s,a) \right).
    \label{def:valuefunction_humanMDP2}
\end{equation}
where $\dagger$ denotes the value function that was distorted due to misperception. As one property of CPT is that human perception exhibits distinct distortions of events based on whether the associated rewards are positive or negative, we divide the functions $u(R(s,a))$ and $w(\int P^\pi(s,a))$ into $u^-(R(s,a)), w^-( \int P^\pi(s,a))$ where $R(s,a) < 0$, and $u^+(R(s,a)), w^+( \int P^\pi(s,a))$ where $R(s,a) \geq 0$. Assume that the rewards from all state-action pairs $R(s,a)$ are ordered as $R_{[1]} \leq \dots \leq R_{[l]} \leq 0 \leq R_{[l+1]} \leq \cdots \leq R_{[|\mathcal{S}||\mathcal{A}|]}$, and the visitation probability as $P^\pi_{[1]} \leq P^\pi_{[2]} \leq \cdots \leq P^\pi_{[|\mathcal{S}||\mathcal{A}|]}$. Then, the Equation \eqref{def:valuefunction_humanMDP2} can be represented as follows:

\begin{align}
        V_{\cM^\dagger}(s_0) &= \frac{1-\gamma^T}{1-\gamma} \Bigg( \sum_{ i =1 }^{|\cS||\cA|} u(R_{[i]}) \left( w \left( \sum_{j=1}^{i}P^\pi_{[j]} \right)-w \left( \sum_{j=1}^{i-1} P^\pi_{[j]} \right) \right) \nonumber \\ 
        &= \sum_{ i =1 }^{l} u^-(R_{[i]}) \left( w^- \left( \sum_{j=1}^{i}P^\pi_{[j]} \right)-w^- \left( \sum_{j=1}^{i-1} P^\pi_{[j]} \right) \right)  \nonumber\\ 
        &\quad+ \sum_{ i =l+1 }^{|\cS||\cA|} u^+(R_{[i]}) \left( w^+ \left( \sum_{j=i}^{|\cS||\cA|}P^\pi_{[j]} \right)-w^+ \left( \sum_{j=i+1}^{|\cS||\cA|} P^\pi_{[j]} \right) \right) \Bigg)
        \label{eq:valuefunction_humanMDP}
\end{align}

If we define the reward as the random variable $X$, then we can regard its instance as $R_{[i]}$ and its probability as $P^\pi_{[i]}$ where the probability is dependent on the policy $\pi$. Suppose that reward function $R: \cS \times \cA \to \R$ is one to one function. Then the probability $R^{-1} \circ P^\pi : \R \to [0,1]$ denotes the probability of reward and we denote it as $\P_r$. Then, for a reward random variable $\cR \sim \P_r$, expanding the how CPT- applied value function look like in Equation (4), we can rewrite the Equation \eqref{eq:valuefunction_humanMDP} based on continuous state and actions space as follows. 
\begin{equation}
    V_{\cM^\dagger}(s_0) = \int_{0}^{\infty} w^+\left( \P_r(u^+(\cR) > r) \right)dr -  \int_{0}^{\infty} w^-\left( \P_r(u^-(\cR) > r) \right)dr
\end{equation}
We use the fact that for real-value function $g$, it holds that $\E [ g(\cR) ]= \int_{0}^\infty \Pr(g(\cR)>r))dr$. Within the above problem setting, the agent's goal is to estimate the value function under safe perception $u_\star^-,w_\star^-$ as follows:
\begin{equation}
    V_\cM(s_0) = \int_{0}^{\infty} w^+\left( \P_r(u^+(X) > r) \right)dr -  \int_{0}^{\infty} \boldsymbol{w^-_\star} \left( \P_r( \boldsymbol{u^-_\star} (X) > r) \right)dr
    \label{eq:V_universe}
\end{equation}
Note that the safe perception is only defined over $w^-$ and $u^-$ as $w^-_\star$ and $u^-_\star$. However, the agent possesses its own perceptions $\cM^\dagger$, for which we assume the risk perception is represented as:
\begin{equation}
    V_{\cM^\dagger}(s_0) = \int_{0}^{\infty} w^+\left( \P_r(u^+(X) > r) \right)dr -  \int_{0}^{\infty} \boldsymbol{w^-} \left( \P_r(\boldsymbol{u^-}(X) > r) \right)dr
    \label{eq:V_human}
\end{equation}

As time goes by, the agent's goal is approximating the weight functions and utility functions such as $w^- \to w^-_{\star}$ and $u^- \to u^-_{\star}$. Then, by the single trajectory data up to time $t$, i.e. $\{ h(s_{i}), a_{i}, u(r_{i}), h(s_{i+1})\}_{i=0}^{t}$ where the reward value itself and its sampling distribution are distorted due to the functions $u$ and $w$, respectively (see Lemma \ref{lemma:conversion} for definition of function $h$). Since function $h$ maps state space to state space, we just use the notation $\{ s^\prime_i, a_{i}, u(r_{i}), s^\prime_{i+1})\}_{i=0}^{t}$ to denote   Let $r_i, i=1,..,t$ denote $n$ samples of the reward random variable $X$. We define the empirical distribution function (EDF) for $u^+(X)$ and $u^-(X)$ as follows  
$$
\hat{F}^+_t(r) = \frac{1}{t} \sum_{i=1}^n \boldsymbol{1}_{(u^+(r_i) \leq r)},\quad \text{and} \quad \hat{F}^-_t(r) = \frac{1}{t} \sum_{i=1}^n \boldsymbol{1}_{(u^-(r_i) \leq r)}
.$$ Using the EDFs, the CPT value up to time $t$ can be estimated as follows,
\begin{equation}
V_{\what{\cM}^\dagger}(s_0) = \int_{0}^{\infty} w^+\left( 1-\hat{F}^+_t(r) \right)dr -  \int_{0}^{\infty} w^- \left( 1-\hat{F}^-_t(r) \right)dr
\label{eq:V_humanestimation}
\end{equation}

Again, we note that the gap between $\cM$ and $\cM^\dagger$ is defined over a gap between $(u^-,w^-)$ and $(u^-_\star,w^-_\star)$ that is proportional to the existence of spatial black swan events. 

\section{Proofs}
\label{sec:proofs}
We first like to note that the following lemma helps to quantify how much the distortion on transition probability is related to the distortion on the visitation probability.
\begin{lemma}
    If $\max_{s,a}||P(\cdot|s,a) - P^\dagger (\cdot|s,a)||_1 \leq \frac{(1-\gamma)^2}{\gamma} \epsilon_d$ where $\epsilon_d >0$, then the agent can guarantee $\epsilon_d$-perceived visitation probability.
    \label{lemma:Transitiongap_to_visitationgap}
\end{lemma}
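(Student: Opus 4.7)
The plan is to prove Lemma \ref{lemma:Transitiongap_to_visitationgap} via a standard simulation-lemma-style telescoping argument, bounding how much perturbations in the one-step transition kernel propagate into the $T$-horizon discounted state-action visitation distribution. The key observation is that $P^\pi(s,a)$ is a $\gamma$-weighted sum of the time-$t$ state-action marginals under policy $\pi$ with kernel $P$, and likewise $P^{\pi,\dagger}(s,a)$ under kernel $P^\dagger$; so it suffices to control the $\ell_1$ drift between the marginals at each time step and then aggregate these with the discounted weights appearing in the definition.

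First I would fix $(s_0,\pi)$ and define $d_t(s,a) := \P((s_t,a_t)=(s,a)\mid s_0,\pi,P)$ and $d^\dagger_t(s,a) := \P((s_t,a_t)=(s,a)\mid s_0,\pi,P^\dagger)$, along with $\Delta_t := \|d_t - d^\dagger_t\|_1$. Using the recursion $d_{t+1}(s',a') = \pi(a'\mid s')\sum_{s,a} d_t(s,a)\,P(s'\mid s,a)$ together with the identity $d_t P - d^\dagger_t P^\dagger = (d_t - d^\dagger_t)P + d^\dagger_t(P - P^\dagger)$, applying the triangle inequality term by term and exploiting $\sum_{s'}P(s'\mid s,a) = 1$, $\sum_{a'}\pi(a'\mid s') = 1$, and $\sum_{s,a} d^\dagger_t(s,a) = 1$, I obtain the clean recursion $\Delta_{t+1} \leq \Delta_t + \max_{s,a}\|P(\cdot\mid s,a) - P^\dagger(\cdot\mid s,a)\|_1$. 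With $\Delta_0 = 0$, induction yields $\Delta_t \leq t\cdot \max_{s,a}\|P(\cdot\mid s,a) - P^\dagger(\cdot\mid s,a)\|_1$.

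Next I would plug this into the definition of $P^\pi(s,a)$. The pointwise bound $|d_t(s,a)-d^\dagger_t(s,a)| \leq \Delta_t$ and the discounted sum give $|P^\pi(s,a) - P^{\pi,\dagger}(s,a)| \leq \frac{1-\gamma^T}{1-\gamma}\sum_{t=0}^{T-1}\gamma^t \Delta_t$. Upper bounding $\sum_{t=0}^{T-1} t\gamma^t$ by $\sum_{t=0}^{\infty} t\gamma^t = \frac{\gamma}{(1-\gamma)^2}$ and invoking the hypothesis $\max_{s,a}\|P(\cdot\mid s,a) - P^\dagger(\cdot\mid s,a)\|_1 \leq \frac{(1-\gamma)^2}{\gamma}\epsilon_d$ causes the $\gamma$-constants to cancel, so the right-hand side collapses to a quantity bounded by $\epsilon_d$. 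Taking the max over $(s,a)$ then delivers the claimed $\epsilon_d$-perceived visitation probability.

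The main obstacle I anticipate is bookkeeping around the normalization factor $\frac{1-\gamma^T}{1-\gamma}$ in the definition of $P^\pi$: the advertised constant $\frac{(1-\gamma)^2}{\gamma}$ is what drops out of the infinite-horizon simulation bound, so one has to either verify that the finite-horizon telescoping tightens by exactly the missing $(1-\gamma)$ factor, or absorb the slack using $\frac{1-\gamma^T}{1-\gamma} \leq \frac{1}{1-\gamma}$ and $1-\gamma^T \leq 1$. Aside from this accounting step, the argument is entirely linear and follows from the standard perturbation-of-Markov-chains toolbox, so no deeper machinery should be required.
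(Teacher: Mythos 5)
Your proposal is correct and follows essentially the same route as the paper: an inductive per-step recursion giving $\Delta_t \le t\,\max_{s,a}\|P(\cdot\mid s,a)-P^\dagger(\cdot\mid s,a)\|_1$, followed by the discounted summation $\sum_{t\ge 0} t\gamma^t = \frac{\gamma}{(1-\gamma)^2}$, which cancels the hypothesis constant and yields the $\epsilon_d$ bound. The normalization-factor bookkeeping you flag is a real wrinkle, but it stems from the paper's definition of $P^\pi$ rather than from your argument---the paper's own proof silently drops the $\frac{1-\gamma^T}{1-\gamma}$ prefactor and works with the unnormalized infinite-horizon occupancy, exactly as your core computation does.
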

We begin with Lemma \ref{lemma:visitationbound_steph} to prove Lemma \ref{lemma:Transitiongap_to_visitationgap}. Recall that $P^{\dagger,\pi}(s,a)$ is the $\epsilon_d$-perceived visitation probability if $\max_{(s,a)} |P^\pi(s,a) - P^{\pi,\dagger}(s,a)| < \epsilon_d$. This perception gap arises from factors such as transition probabilities, policy, and state space. In the following lemma, we show how the perception gap in transition probability accumulates into the visitation probability. Before, we define $\epsilon_p$-perceived transition probability if $\max_{(s,a)} ||P(\cdot|s,a) - P^{\dagger}(\cdot|s,a)||_1 < \epsilon_p$ holds. We denote $\P^\pi_t(s,a)$ as the probability of visiting $(s,a)$ at time $t$ with policy $\pi$.
\begin{lemma}[Bounding visitation probability of step $t$ when $\epsilon_p$-perceived transition holds]
    If for all $(s,a)$ holds $\epsilon_p$-perceived transition probability, then we have $$\max_{\pi} \left( \sum_{(s,a) \in \cS \times \cA }\left|\P^\pi_t(s,a) - \P^{\pi,\dagger}_t(s,a)  \right|  \right) 
\leq t \epsilon_p$$
    that holds for all $t \in \N$
    \label{lemma:visitationbound_steph}
\end{lemma}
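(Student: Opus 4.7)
The plan is a standard simulation-style induction on $t$, using the Bellman flow recursion for visitation probabilities in the true transition $P$ and the perceived transition $P^\dagger$. The base case is $t=0$: the initial state distribution and the policy $\pi$ are shared between the two MDPs by construction, so $\P^\pi_0(s,a) = \P^{\pi,\dagger}_0(s,a)$ for every $(s,a)$, and the L1 gap is $0 = 0 \cdot \epsilon_p$.

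For the inductive step I would write
\begin{equation*}
\P^\pi_{t+1}(s',a') \;=\; \pi(a'|s') \sum_{s,a} \P^\pi_t(s,a)\, P(s'|s,a),
\end{equation*}
and the analogous identity with $\P^{\pi,\dagger}_t$ and $P^\dagger$, then apply the add-and-subtract trick against $\P^{\pi,\dagger}_t(s,a) P(s'|s,a)$. This splits the one-step error into a propagation term $\pi(a'|s') \sum_{s,a} (\P^\pi_t(s,a) - \P^{\pi,\dagger}_t(s,a))\,P(s'|s,a)$ and a transition-error term $\pi(a'|s') \sum_{s,a} \P^{\pi,\dagger}_t(s,a)\,(P(s'|s,a) - P^\dagger(s'|s,a))$. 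Taking absolute values, summing over $(s',a')$, and using $\sum_{a'}\pi(a'|s') = 1$ together with $\sum_{s'} P(s'|s,a) = 1$ collapses the propagation term to the previous-step gap $\sum_{s,a}|\P^\pi_t(s,a) - \P^{\pi,\dagger}_t(s,a)|$, while the transition-error term becomes $\sum_{s,a} \P^{\pi,\dagger}_t(s,a)\,\|P(\cdot|s,a) - P^\dagger(\cdot|s,a)\|_1 \leq \epsilon_p \sum_{s,a}\P^{\pi,\dagger}_t(s,a) = \epsilon_p$ by the $\epsilon_p$-perceived transition hypothesis and the fact that $\P^{\pi,\dagger}_t$ is a probability measure on $\cS \times \cA$.

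Stringing these together yields the recursion $L_{t+1} \leq L_t + \epsilon_p$ where $L_t := \sum_{s,a} |\P^\pi_t(s,a) - \P^{\pi,\dagger}_t(s,a)|$; combined with $L_0 = 0$ this gives $L_t \leq t\epsilon_p$. Since neither the per-step bound nor the base case depends on which $\pi$ we chose, the same inequality holds after taking $\max_\pi$, which is the claim.

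The only real subtlety I anticipate is bookkeeping around what the shared factors are in the two MDPs (initial distribution and policy); as long as those agree, the argument is clean. If instead one wanted to also distort $\pi$ or the initial distribution, one would need an extra additive term per step, but under the hypothesis stated the induction is entirely routine and the main obstacle is merely keeping the two sums $\sum_{s'}$ and $\sum_{a'}$ from being confused when bounding the two error contributions separately.
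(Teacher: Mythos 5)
Your proposal is correct and is essentially the paper's own argument: an induction on $t$ using the Bellman flow recursion, an add--and--subtract decomposition into a propagation term (bounded by the inductive hypothesis after summing out $\pi$ and the stochastic kernel) and a transition-error term (bounded by $\epsilon_p$ since the previous-step visitation measure sums to one). The only cosmetic differences are which cross term you subtract (you use $\P^{\pi,\dagger}_t\,P$ while the paper uses $P^\dagger\,\P^\pi_{t-1}$, both of which work symmetrically) and starting the induction at $t=0$ rather than $t=1$.
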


\begin{proof}[\textbf{Proof of Lemma} \ref{lemma:visitationbound_steph}]
    Proof by induction. We use short notation for $P(s_t=s \mid s_{t-1} = s^\prime, a_{t-1} = a^\prime)$ as $P_t(s \mid s^\prime,a^\prime)$ and $P^\dagger(s_t=s \mid s_{t-1} = s^\prime, a_{t-1} = a^\prime)$ as $P^\dagger_t(s \mid s^\prime,a^\prime)$. By the definition of rational transition probability the statement holds at $t=1$ for any policy $\pi$. Now, suppose the statement holds for $t-1$ for any policy $\pi$. Then, we have
    \begin{align*}
        \sum_{(s,a) \in \cS \times \cA} &  \left| \P^\pi_t(s,a) - \P^{\pi,\dagger}_t(s,a) \right| 
        \\ =& \sum_{(s,a) \in \cS \times \cA} \Big| \pi(a_t=a \mid s_t =s) \sum_{s^\prime, a^\prime} \left( P_t(s \mid s^\prime,a^\prime) \P^\pi_{t-1}(s^\prime,a^\prime) \right)  \\ 
        &- \pi(a_t=a \mid s_t =s) \sum_{s^\prime, a^\prime} \left(P^\dagger_t(s \mid s^\prime, a^\prime) \P^{\pi,\dagger}_{t-1}(s^\prime,a^\prime) \right) \Big| \\
        \leq & \sum_{(s,a) \in \cS \times \cA}  \pi(a_t=a \mid s_h =s) \Big| \sum_{s^\prime, a^\prime} \left( P_t(s \mid s^\prime,a^\prime) \P^\pi_{t-1}(s^\prime,a^\prime) \right) - \sum_{s^\prime, a^\prime} \left(P^\dagger_t(s \mid s^\prime, a^\prime) \P^{\pi,\dagger}_{t-1}(s^\prime,a^\prime) \right) \Big| \\
        =&\sum_{s \in \cS} \Big| \sum_{s^\prime, a^\prime} \left( P_t(s \mid s^\prime,a^\prime) \P^\pi_{t-1}(s^\prime,a^\prime) \right) - \sum_{s^\prime, a^\prime} \left(P^\dagger_t(s \mid s^\prime, a^\prime) \P^{\pi,\dagger}_{t-1}(s^\prime,a^\prime) \right) \Big| \\
        =& \sum_{s \in \cS} \Big| \sum_{s^\prime, a^\prime} \left( P_t - P_t^\dagger \right) \P_{t-1}^\pi(s^\prime, a^\prime) + \sum_{s^\prime,a^\prime} P_t^\dagger(s \mid s^\prime, a^\prime) \left( \P_{t-1}^\pi(s^\prime, a^\prime) - \P^{\pi,\dagger}_{t-1}(s^\prime, a^\prime) \right) \Big| \\ 
        \leq &  \sum_{s^\prime, a^\prime} \Big| \sum_{s \in \cS} \left( P_t - P_t^\dagger \right) \P_{t-1}^\pi(s^\prime, a^\prime) \big| + \sum_{s^\prime,a^\prime} \big|   \sum_{s \in \cS} P_t^\dagger(s \mid s^\prime, a^\prime) \left( \P_{t-1}^\pi(s^\prime, a^\prime) - \P^{\pi,\dagger}_{t-1}(s^\prime, a^\prime) \right) \Big| \\ 
        \leq&  \epsilon_p \sum_{s^\prime,a^\prime} \P_{t-1}^\pi(s^\prime, a^\prime)  +  1 \cdot (t-1)\epsilon_p  \\
        =& \epsilon_p \cdot 1 +  (t-1)\epsilon_p \\ 
        \leq & t \epsilon_p
    \end{align*}
    The all of above inequalities hold for all $\pi$. Therefore, the statement holds for all $t \in \N$.
\end{proof}

Now, we prove the Lemma \ref{lemma:Transitiongap_to_visitationgap}.

\begin{proof}[\textbf{Proof of Lemma \ref{lemma:Transitiongap_to_visitationgap}}]
    Lemma \ref{lemma:Transitiongap_to_visitationgap} is almost a corollary that stems from Lemma \ref{lemma:visitationbound_steph}. By the definition of visitation probability, we have 
    \begin{align*}
        \sum_{(s,a) \in \cS \times \cA} \left| P^\pi(s,a) - P^{\pi,\dagger}(s,a) \right| &= \sum_{(s,a) \in \cS \times \cA} \left| \sum_{t=0}^\infty \gamma^t \left( \P^\pi_t(s,a) - \P^{\dagger,\pi}_t(s,a) \right) \right| \\
        & \leq \sum_{(s,a) \in \cS \times \cA} \sum_{h=0}^\infty \gamma^t \left|  \left( \P^\pi_t(s,a) - \P^{\dagger,\pi}_t(s,a) \right) \right| \\
        & = \sum_{t=0}^\infty \gamma^t \sum_{(s,a) \in \cS \times \cA} \left|  \left( \P^\pi_t(s,a) - \P^{\dagger,\pi}_t(s,a) \right) \right| \\
        & \leq \sum_{h=0}^\infty \gamma^t t \frac{(1-\gamma)^2}{\gamma} \epsilon_p
    \end{align*}
    Let $S=\sum_{t=0}^\infty \gamma^t t$, then $\gamma S = \sum_{t=0}^\infty \gamma^{t+1} t = \sum_{t=1}^
    \infty \gamma^t (t-1)$. Then by subtracting those two equations, we have $(1-\gamma)S = \sum_{t=1}^\infty \gamma^t = \frac{\gamma}{1-\gamma}$. Therefore we have $S = \frac{\gamma}{(1-\gamma)^2}$. Finally, we have the following inequality 
    \begin{align*}
        \sum_{(s,a) \in \cS \times \cA} \left| P^\pi(s,a) - P^{\pi,\dagger}(s,a) \right| \leq  \frac{\gamma}{(1-\gamma)^2} \cdot \frac{(1-\gamma)^2}{\gamma} \epsilon_p = \epsilon_p
    \end{align*}
\end{proof}

\

\begin{proof}[\textbf{Proof of Lemma \ref{lemma:conversion}}]
    First, note that we have assumed the image of the function $R$ is closed and dense as $[-R_{\max},R_{\max}]$. Then, in the progress of projecting all $(s,a)$ into the reward, we define the probability of reward as 
    $\P ( \cR =r) = \sum_{\forall (s,a) \in \cS \times \cA} d^\pi(s,a) \boldsymbol{1}[R(s,a) = r]$. we use short notation for $\P(\cR=r)$ as $\P_\cR$. Now, since $d^\pi(s,a)$ is the visitation probability of visiting $(s,a)$, then this could be converted to $\P (\cR =r )$ by $d^\pi(\cR = R^{-1}(s,a))$ where $R^{-1}$ is many to one function. 

    Now, since $\R$ is the many-to-one function, we can define independent block the $\cS, \cA$ as the set $Z(r):= \{ (s,a) \in \cS \times \cA | R(s,a) = r \}$. Note that if $r_1 \neq r_2$, then $Z(r_1) \cap Z(r_2) =0$. Then, if $h$ satisfies the set $Z$ to in be permutation-invariant. Namely, if $R(s_1,a) = R(s_2,a)$, then $R(h(s_1)) = R(h(s_2),a)$
    holds then there exists a one-to-one mapping function $h : [-R_{\max},R_{\max}] \to [-R_{\max},R_{\max}]$ such that $$R(s,a)=h(R(h(s),a))$$ holds.
    The proof can be divided into two folds. The existence of such a function and its one-to-one mapping function exists. We first prove the existence of such function $h$.
    This is because for any state and action $s,a$, suppose its reward value is $r$. Then suppose $g(s) = s^\prime$. Then since image of function $R$ is closed and dense, there exists $r^\prime \in [-R_{\max},R_{\max}]$ such that $R(s^\prime,a) = r^\prime$ holds. Then, one can say the function $r = h(r^\prime)$ exists. Now, we prove the one-to-one mapping property. 
    suppose for two state and action pair $(s_1,a_1)$ and $(s_2,a_2)$ and let $s^\prime_1 = h(s^\prime_1)$ and $s^\prime_2 = h(s^\prime_2)$. Now, suppose $R(s_1^\prime,a) \neq R(s_2^\prime,a)$ holds. Then, due to the property of $h$, then it should also satisfy $R(s_1,a) \neq R(s_2,a)$. Therefore, this concludes that $h$ is the one-to-one mapping, and the following holds
    \begin{align*}
        d^\pi(R(g(s),a)=r) & = d^\pi(h(R(g(s),a))=h(r)) \\ 
        & = d^\pi(R(s,a)=h(r)) \\
        & = \P \left( \cR = h(r) \right)
    \end{align*}
    holds. we denote $\P \left( \cR = h(r) \right)$ as $\P_{h(\cR)}$. Then, let's define two different functions $h^+$ and $h^-$ such that we want to claim that 
    \begin{equation}
        w^- \left(\int_{-R_{max}}^{r} d\P_\cR \right) = \int_{-R_{max}}^{r} d \P_{h^-(\cR)},\quad\text{and}\quad w^+ \left(\int_{-R_{max}}^{r} d\P_\cR \right) = \int_{-R_{max}}^{r} d \P_{h^+(\cR)}
        \label{app:eq1}
    \end{equation}
    holds for any $w^-,w^+$. Since the proof for either is similar, we prove the case for the existence of $h^-$ under $w^-$ distortion.

    Now, recall that for $0<x<b$, $w^-(x) < x$ holds and for $b<x<1$, $w^-(x) > x$ holds and $w^-(x)$ is monotically increasing function. Define $r_b \in [-R_{\max},0]$ such that  $b := \int_{-R_{\max}}^{r_b} d \P_\cR$ holds, and for notation simplicity we deonte $F^-(r) = \int_{-R_{\max}}^{r_b} d \P_\cR$. Then, one can say $-R_{\max} < r < r_b$, $w(F(r))< F(r)$ holds and. Then we can always find a unique ratio $0<\gamma(r)<1$ that depends on $r$ such that $w^-(F(r)) = \int_{-R_{\max}}^{\gamma(r)r} d \P_r$ holds where 
    $$\gamma (r) = \frac{w^-(F(r))}{r}.$$
    
    This leads to set $h(r) = \gamma(r) r =  w^-(F(r))$ that satisfies \eqref{app:eq1} and also one-to-one mapping. In the same manner, we can also identify $h(r) = \gamma(r)r = w^-(F(r))$ where $r_b < r< 0$ holds for $\gamma(r) >1 $. Then, this completes that the function $h: r \to w^-(F(r))$ satisfies a one-to-one function and Equation \eqref{app:eq1}.
    This completes the proof.
    \qed
\end{proof}

\begin{proof}[\textbf{Proof of Theorem \ref{thm:one step perceived optimal policy}}]
    By the definition of optimal policy and the value function definition at the time $T=1$, we have the optimal policy at time $0$ as follows.
    \begin{align*}
        \pi^{\star} &= \argmax_{ \pi } V_{0}(s) \\
        &= \argmax_{ a \in \mathcal{A} } Q_{0}(s,a)  \\ 
        & = \argmax_{a \in \cA} R(s,a) \\ 
        \pi^{\star,\dagger} &= \argmax_{ a \in \mathcal{A} }V_{0}^\dagger(s) \\ 
        &= \argmax_{a \in \cA} Q^\dagger_{0}(s,a) \\ 
        &= \argmax_{a \in \cA} u(R(s,a)) 
    \end{align*}
    for any fixed $s \in \cS$, let's assume $a^*$ is the argument that maximizes the $R(s,a)$. Since $u$ is the non-decreasing convex function, $a^\star$ is still the same argument that maximizes the $u(R(s,a))$. Therefore, $\pi^\star = \pi^{\star,\dagger}$ holds. 
    \qed
\end{proof}

\begin{proof}[\textbf{Proof of Theorem \ref{thm:Multi step perceived optimal policy}}]
    We prove by backward induction. First by theorem \ref{thm:one step perceived optimal policy}, $\pi^{\star}_T = \pi^{\star,\dagger}_T$ holds. Now suppose that $\pi^{\star}_{t^\prime+1} = \pi^{\star,\dagger}_{t^\prime+1}$ holds for all $t^\prime = t+1,\cdots,T$. Now, we prove the statement holds for $t$. To prove $\pi^{\star}_{t} = \pi^{\star,\dagger}_{t}$, it is sufficient to show if $Q^{\pi^\star}_t (s,a) \geq Q^\pi_t (s,a^\prime)$, then $Q^{\dagger,\pi^\star}_t (s,a) \geq Q^{\dagger,\pi^\star}_t (s,a^\prime)$ also holds for any actions $a,a^\prime \in \cA$. First, the gap $Q^{\pi^\star}_t(s,a) - Q^{\pi^\star}_t(s,a)$ could be expressed as 
    \begin{align*}
        Q^\pi_t(s,a) - Q^\pi_t(s,a) &= R_t(s,a) - R_t(s,a^\prime) + \left\{ \left( P(s_1 | s,a) - P(s_2 | s,a^\prime) \right) \left( V_{t+1}^{\pi^\star} (s_1) - V_{t+1}^{\pi^\star} (s_2) \right) \right\} \\ 
        &= \left( P(s_1 | s,a) - P(s_2 | s,a^\prime) \right) \left( V_{t+1}^{\pi^\star} (s_1) - V_{t+1}^{\pi^\star} (s_2) \right)
    \end{align*}
    and $Q^{\dagger, \pi^\star}_t(s,a) - Q^{\dagger, \pi^\star}_t(s,a)$ as
    \begin{align*}
        Q^{\dagger, \pi^\star}_t(s,a) - Q^{\dagger, \pi^\star}_t(s,a) &= R^\dagger_t(s,a) - R^\dagger_t(s,a^\prime) + \left\{ \left( P^\dagger(s_1 | s,a) - P^\dagger(s_2 | s,a^\prime) \right) \left( V_{t+1}^{\pi^\star} (s_1) - V_{t+1}^{\pi^\star} (s_2) \right) \right\} \\ 
        &= \left( P^\dagger (s_1 | s,a) - P^\dagger (s_2 | s,a^\prime) \right) \left( V_{t+1}^{\dagger,\pi^\star} (s_1) - V_{t+1}^{\dagger, \pi^\star} (s_2) \right) \\ 
        &= \left( w( P^\dagger (s_1 | s,a)) - w(P^\dagger (s_2 | s,a^\prime)) \right) \left( V_{t+1}^{\dagger,\pi^\star} (s_1) - V_{t+1}^{\dagger, \pi^\star} (s_2) \right)
    \end{align*}
    the reward during $t \in [1,T-1]$ is zero by our problem formulation assumption in section \ref{subsec:Problem setup}.
    Now, without loss of generality, we assume $V_{t+1}^{\pi^\star} (s_1) > V_{t+1}^{\pi^\star} (s_2) $. Then, due to our assumption that $\pi^{\star}_{t^\prime} = \pi^{\star,\dagger}_{t\prime}$ holds for $t^\prime = t+1,\cdots,T$, we also have $V_{t+1}^{\dagger,\pi^\star} (s_1) > V_{t+1}^{\dagger,\pi^\star} (s_2)$. Also, noticing that weight function $w$ is also increasing function, then $P(s_1 |s,a) > P(s_2|s,a)$ also guarantees $w(P(s_1 |s,a)) > w(P(s_2|s,a))$ holds. Therefore, we can claim if $ Q^\pi_t(s,a) - Q^\pi_t(s,a) > 0$ holds, then $Q^{\dagger, \pi^\star}_t(s,a) - Q^{\dagger, \pi^\star}_t(s,a) > 0$ also holds. Then, this leads to claim that $\argmax Q^\pi_t(s,a) = \argmax Q^{\dagger \pi}_t(s,a)$, which implies $\pi^\star_t = \pi^{\star,\dagger}_t$. This completes the proof.
    \qed
\end{proof}

\begin{proof}[\textbf{Proof of Theorem \ref{thm:Two step optimal decision}}]
    Assume that Theorem \ref{thm:Two step optimal decision} does not hold. Given $T=2$, we have $V_{2}^{\pi}(s)=\max _{a \in \mathcal{A}} R_{2}(s, a)=$ $R_{2}(s)$ for each state $s$. At time $t=1$, assume $R_{2}\left(s_{1}\right) \leq R_{2}\left(s_{2}\right) \leq R_{2}\left(s_{3}\right)$. The condition $Q_{1}^{\dagger, \pi}\left(s, a_{1}\right) \geq$ $Q_{1}^{\dagger, \pi}\left(s, a_{2}\right)$ is then expressed as:

$$
\begin{aligned}
& w\left(P\left(s_{1} \mid s, a_{1}\right)\right) r_{2}\left(s_{1}\right)+\left(w\left(P\left(s_{2} \mid s, a_{1}\right)+P\left(s_{1} \mid s, a_{1}\right)\right)-w\left(P\left(s_{1} \mid s, a_{1}\right)\right)\right) R_{2}\left(s_{2}\right) \\
& \quad+\left(1-w\left(P\left(s_{2} \mid s, a_{1}\right)+P\left(s_{1} \mid s, a_{1}\right)\right)\right) R_{3}\left(s_{3}\right) \\
& \geq w\left(P\left(s_{1} \mid s, a_{2}\right)\right) R_{2}\left(s_{1}\right)+\left(w\left(P\left(s_{2} \mid s, a_{2}\right)+P\left(s_{1} \mid s, a_{2}\right)\right)-w\left(P\left(s_{1} \mid s, a_{2}\right)\right)\right) R_{2}\left(s_{2}\right) \\
& \quad+\left(1-w\left(P\left(s_{2} \mid s, a_{2}\right)+P\left(s_{1} \mid s, a_{2}\right)\right)\right) R_{3}\left(s_{3}\right)
\end{aligned}
$$

which simplifies to:

$$
\begin{aligned}
& \left(w\left(P\left(s_{1} \mid s, a_{1}\right)\right)-w\left(P\left(s_{1} \mid s, a_{2}\right)\right)\right)\left(R_{2}\left(s_{1}\right)-R_{3}\left(s_{3}\right)\right) \\
& \quad+\left(\left(w\left(P\left(s_{2} \mid s, a_{1}\right)+P\left(s_{1} \mid s, a_{1}\right)\right)-w\left(P\left(s_{1} \mid s, a_{1}\right)\right)\right)\right. \\
& \left.\quad-\left(w\left(P\left(s_{2} \mid s, a_{2}\right)+P\left(s_{1} \mid s, a_{2}\right)\right)-w\left(P\left(s_{1} \mid s, a_{2}\right)\right)\right)\right)\left(R_{2}\left(s_{2}\right)-R_{3}\left(s_{3}\right)\right) \geq 0
\end{aligned}
$$

For the non-distorted case, the analogous expression is:

\begin{align*}
&\left(P\left(s_{1} \mid s, a_{1}\right)-P\left(s_{1} \mid s, a_{2}\right)\right)
\left(R_{2}\left(s_{1}\right)-R_{3}\left(s_{3}\right)\right) \notag \\
&\quad + \left(P\left(s_{2} \mid s, a_{1}\right)-P\left(s_{2} \mid s, a_{2}\right)\right)
\left(R_{2}\left(s_{2}\right)-R_{3}\left(s_{3}\right)\right) \geq 0
\end{align*}

For arbitrary reward functions, $R_{2}$, the equality of the two cases under any weighting function $w$ leads to:

\begin{align*}
& \frac{w\left(P\left(s_{1} \mid s, a_{1}\right)\right)-w\left(P\left(s_{1} \mid s, a_{2}\right)\right)}{w\left(P\left(s_{2} \mid s, a_{1}\right)+P\left(s_{1} \mid s, a_{1}\right)\right)-w\left(P\left(s_{1} \mid s, a_{1}\right)\right)-\left(w\left(P\left(s_{2} \mid s, a_{2}\right)+P\left(s_{1} \mid s, a_{2}\right)\right)-w\left(P\left(s_{1} \mid s, a_{2}\right)\right)\right.} \\
= & \frac{P\left(s_{1} \mid s, a_{1}\right)-P\left(s_{1} \mid s, a_{2}\right)}{P\left(s_{2} \mid s, a_{1}\right)-P\left(s_{2} \mid s, a_{2}\right)}
\end{align*}

where $w(p)=p$ is the only solution, contradicting the distortion required by Definition \ref{def:Probability Distortion Function}.
\end{proof}

\begin{proof}[\textbf{Proof of Theorem \ref{thm1}}]
The proof of Theorem \ref{thm1} is divided into three-fold.

\noindent
\textbf{1. Proof of asymptotic convergence}

We first prove Equation \eqref{eq:thm1_eq1} of Theorem \ref{thm1} in this part $1$, then we prove Equation \eqref{eq:thm1_eq3} of Theorem \ref{thm1} in part $3$ of this proof. Note that the empirical distribution function $\what{F}_n(r)$ generate Stielgies measure which takes mass $\frac{1}{t}$ each of the sample points on $U^+(R_i)$.

or equivalently, show that
\begin{align}
\lim_{n\rightarrow +\infty} \sum_{i=1}^{n-1} u^+(R_{[i]}) (w^+(\frac{n-i+1}{n})- w^+(\frac{n-i}{n}))
&\xrightarrow{n \rightarrow\infty} \int_0^{+\infty} w^+(P(U>t)) dt , \text{w.p. } 1
\label{eq:claim11}
\end{align}
where $n$ denotes the number of positive reward among $|\cS||\cA|$. Let $\xi^+_{\frac{i}{n}}$ and $\xi^-_{\frac{i}{n}}$ denote the $\frac{i}{n}$th quantile of $u^+(X)$ and $u^-(X)$, respectively.

For the convergence proof, we first concentrate on finding the following probability,
\begin{align}
P \left( \left| \sum_{i=1}^{n-1} u^+(R_{[i]}) \cdot \left( w^+ \left( \left(\frac{n-i}{n} \right)  - w^+ \left(\frac{n-i-1}{n} \right) \right) -
\sum_{i=1}^{n-1} \xi^+_{\frac{i}{n}} \cdot \left( w^+\left(\frac{n-i}{n} \right)  - w^+ \left(\frac{n-i-1}{n} \right) \right) \right) \right| >
\epsilon \right),
\end{align}
for any given $\epsilon>0$.
It is easy to check that 
\begin{align}
& P ( \left| \sum_{i=1}^{n-1} u^+(R_{[i]}) \cdot (w^+(\frac{n-i}{n} )  - w^+(\frac{n-i-1}{n} ) ) -
\sum_{i=1}^{n-1} \xi^+_{\frac{i}{n}} \cdot (w^+(\frac{n-i}{n} )  - w^+(\frac{n-i-1}{n} ) ) \right| >
\epsilon) \nonumber\\ 
& \leq P ( \bigcup _{i=1}^{n-1} \left\{ \left| u^+(R_{[i]}) \cdot (w^+(\frac{n-i}{n}) -
w^+{(\frac{n-i-1}{n})}) - \xi^+_{\frac{i}{n}} \cdot (w^+(\frac{n-i}{n} )  - w^+(\frac{n-i-1}{n} ) )
\right| > \frac{\epsilon}{n} \right\}) \nonumber\\ 
& \leq \sum _{i=1}^{n-1} P ( \left| u^+(R_{[i]}) \cdot
(w^+(\frac{n-i}{n} )  - w^+(\frac{n-i-1}{n} ) ) - \xi^+_{\frac{i}{n}} \cdot (w^+_(\frac{n-i}{n}) -
w^+_(\frac{n-i-1}{n})) \right| > \frac{\epsilon}{n}) \\ & = \sum _{i=1}^{n-1} P ( \left| ( u^+(R_{[i]}) -
\xi^+_{\frac{i}{n}}) \cdot (w^+(\frac{n-i}{n} )  - w^+(\frac{n-i-1}{n} ) ) \right| > \frac{\epsilon}{n})
\nonumber\\ 
& \leq \sum _{i=1}^{n-1} P ( \left| ( u^+(R_{[i]}) - \xi^+_{\frac{i}{n}}) \cdot (\frac{1}{n})^{\alpha}
\right| > \frac{\epsilon}{n}) \nonumber\\ 
& = \sum _{i=1}^{n-1} P ( \left| ( u^+(R_{[i]}) - \xi^+_{\frac{i}{n}})
\right| > \frac{\epsilon}{\cdot n^{1-\alpha}}).\label{eq:bd12}
\end{align}

The right-hand side of Inequality \eqref{eq:bd12} could be expressed as follows.
\begin{align*}
& P \left( \left | u^+(R_{[i]}) - \xi^+_{\frac{i}{n}} \right | > \frac {\epsilon} {n^{(1-\alpha)}} \right) \\ & = P \left(
    u^+(R_{[i]}) - \xi^+_{\frac{i}{n}} > \frac {\epsilon} {n^{(1-\alpha)}} \right) + P \left( u^+(R_{[i]}) -
    \xi^+_{\frac{i}{n}} < - \frac {\epsilon} {n^{(1-\alpha)}} \right).
\end{align*} 

\noindent We focus on the term 
$
P \left( u^+(R_{[i]}) - \xi^+_{\frac{i}{n}} > \frac {\epsilon}{n^{1-\alpha}} \right)
$.
Now, let us define an event $A_t = I_{(u^+(X_t) > \xi^+_{\frac{i}{n}} + \frac{\epsilon}{n^{(1-\alpha)}})}$ where $t=1, \ldots,n$. Since the Cumulative distribution is non-decrasing function, we have the following,
\begin{align*}
     P \left( u^+(R_{[i]}) - \xi^+_{\frac{i}{n}} > \frac {\epsilon}{1-\alpha} \right)  & = P \left( \sum _{t=1}^{n} A_t >
    n\cdot(1-\frac{i}{n^{(1-\alpha)}}) \right) \\ & = P \left( \sum _{t=1}^{n} A_t - n \cdot [1-F^{+}(\xi^+_{\frac{i}{n}}
    +\frac{\epsilon}{n^{(1-\alpha)}})] > n \cdot [F^{+}(\xi^+_{\frac{i}{n}} +\frac{\epsilon}{n^{(1-\alpha)}})
    - \frac{i}{n}] \right).
\end{align*}

Using the fact that 
$\E A_t = 1-F^{+}(\xi^+_{\frac{i}{n}} +\frac{\epsilon}{n^{(1-\alpha)}})$ in conjunction with Hoeffding's inequality, we obtain
\begin{align}
P ( \sum _{i=1}^{n} A_t - n \cdot [1-F^{+}(\xi^+_{\frac{i}{n}} +\frac{\epsilon}{n^{(1-\alpha) }  } ) ] > n
\cdot [F^{+}(\xi^+_{\frac{i}{n}} +\frac{\epsilon}{n^{(1-\alpha)} } ) - \frac{i}{n}]) < e^{-2n\cdot
\delta^{'}_t},
\end{align}
where $\delta^{'}_i = F^{+}(\xi^+_{\frac{i}{n}} +\frac{\epsilon} {n^{(1-\alpha)} }) - \frac{i}{n}$. Since 
$F^{+}(x)$ is Lipschitz, we have that $ \delta^{'}_i \leq L_{F^+} \cdot (\frac{\epsilon}{1-\alpha})$.
Hence, we obtain
\begin{align}
P ( u^+(R_{[i]}) - \xi^+_{\frac{i}{n}} > \frac {\epsilon}{1-\alpha}) < e^{-2n\cdot L_{F^+}
\frac{\epsilon}{1-\alpha} } = e^{-2n^\alpha \cdot L ^{+} \epsilon}
\label{eq:a12}
\end{align}
In a  similar fashion, one can show that 
\begin{align}
P ( u^+(R_{[i]}) -\xi^+_{\frac{i}{n}} < -\frac {\epsilon} {1-\alpha}) \leq e^{-2n^\alpha \cdot L_{F^+}  \epsilon}
\label{eq:a23}
\end{align}
Combining \eqref{eq:a12} and \eqref{eq:a23}, we obtain
\begin{align*}
P ( \left| u^+(R_{[i]}) -\xi^+_{\frac{i}{n}} \right| > \frac {\epsilon} {1-\alpha}) \leq 2\cdot
e^{-2n^\alpha \cdot L_{F^+} \epsilon} , \text{   }\forall i\in \mathbb{N} \cap (0,1) 
\end{align*}
Plugging the above in \eqref{eq:bd12}, we obtain
\begin{align}
&
P ( \left| \sum_{i=1}^{n-1} u^+(R_{[i]}) \cdot (w^+(\frac{n-i}{n} )  - w^+(\frac{n-i-1}{n} ) ) -
\sum_{i=1}^{n-1} \xi^+_{\frac{i}{n}} \cdot (w^+(\frac{n-i}{n} )  - w^+(\frac{n-i-1}{n} ) ) \right| >
\epsilon) \nonumber\\
&\leq 2n\cdot e^{-2n^\alpha \cdot L_{F^+}}.\label{eq:holder-sample-complexity-extract}
\end{align}

Notice that $\sum_{n=1}^{+\infty}  2n \cdot e^{-2n^{\alpha}\cdot L_{F^+} \epsilon}< \infty$ since the sequence 
$2n \cdot e^{-2n^{\alpha}\cdot L_{F^+}}$ will decrease more rapidly than the sequence
$\frac{1}{n^k}$, $\forall k>1$.

By applying the Borel Cantelli lemma, we have that $\forall \epsilon >0$
$$
P ( \left| \sum_{i=1}^{n-1} u^+(R_{[i]}) \cdot (w^+(\frac{n-i}{n} )  - w^+(\frac{n-i-1}{n} ) ) -
\sum_{i=1}^{n-1} \xi^+_{\frac{i}{n}} \cdot (w^+(\frac{n-i}{n} )  - w^+(\frac{n-i-1}{n} ) ) \right| >
\epsilon ) =0, $$
which implies 
$$
\sum_{i=1}^{n-1} u^+(R_{[i]}) \cdot (w^+(\frac{n-i}{n} )  - w^+(\frac{n-i-1}{n} ) ) - \sum_{i=1}^{n-1}
\xi^+_{\frac{i}{n}} \cdot (w^+(\frac{n-i}{n} )  - w^+(\frac{n-i-1}{n} ) ) \xrightarrow{n \rightarrow
+\infty} 0 \text{   w.p } 1 ,
$$
which proves \eqref{eq:claim11}. 

Also, the remaining part, conducting the proof of convergence of $w^-$ and $u^-$,i.e.
\begin{align}
\lim_{n\rightarrow +\infty} \sum_{i=1}^{n-1} u^-(R_{[i]}) (w^-(\frac{n-i+1}{n})- w^-(\frac{n-i}{n}))
&\xrightarrow{n \rightarrow\infty} \int_0^{+\infty} w^-(P(U>t)) dt , \text{w.p. } 1
\end{align}
also follows simliar manner. we omit the proof for this.

\noindent
\textbf{2. Proof of value function lower bound}

By the definition, we have the following 

\begin{align*}
    \left| V_{\cM}(s_0) - V_{\cM^\dagger}(s_0) \right| &= \left| \int_{-\infty}^{0} w^-_\star (\P_r (u^-_\star(\cR >r)))dr - \int_{\infty}^{0} w^- (\P_r (u^-(\cR >r))) dr \right| \\ 
    &= \bigg| \int_{-\infty}^{0} w^-_\star (\P_r (u^-_\star(\cR >r)))dr - \int_{\infty}^{0} w^-_\star (\P_r (u^-(\cR >r))) dr  \\ 
    &\quad - \left( \int_{\infty}^{0} w^- (\P_r (u^-(\cR >r))) dr - \int_{\infty}^{0} w^-_\star (\P_r (u^-(\cR >r))) dr \right)  \bigg| \\ 
    & \geq \underbrace{\bigg| \int_{-\infty}^{0} w^-_\star (\P_r (u^-_\star(\cR >r)))dr - \int_{-\infty}^{0} w^-_\star (\P_r (u^-(\cR >r))) dr \bigg|}_{\text{Term (I)}} \\
    & \quad - \underbrace{\bigg| \int_{-\infty}^{0} w^-_\star (\P_r (u^-_\star(\cR >r)))dr - \int_{\infty}^{0} w^-_\star (\P_r (u^-(\cR >r))) dr \bigg|}_{\text{Term (II)}}
\end{align*}
We first under bound the term (I). For notation simplicity, we let $g (r) = \P_r (u^-(\cR >r)))$ and $g_\star (r) = \P_r (u^-_\star(\cR >r)))$. Then we have the following 
\begin{align*}
    \text{Term (I)} &= \left| \int_{-R_{\max}}^{0} w^-_\star (g_\star(r)) - w^-_\star (g(r)) \right|
\end{align*}
Now, since $w^-_\star(x)$ is monotonically increasing in $x \in [0,a]$ and monotonically decreasing in $x \in [a,1]$, we could say for any $x,y \in [0,1], x\neq y$ that 
$$\frac{w^-_*(x) - w^-_*(y)}{x-y} = (w^-_\star)^\prime (z) \geq \min_{z \in [0,1]} (w^-_\star)^\prime (z) = \min \left\{ (w^-_\star)^\prime (0), (w^-_\star)^\prime (1)\right \} ,$$
where $z \in (x,y)$. The first equality holds due to the mean value theorem. Therfore it holds that 
\begin{align*}
    \text{Term (I)} &= \left| \int_{-R_{\max}}^{0} w^-_\star (g_\star(r)) - w^-_\star (g(r)) \right| \\ 
    & \geq \left| \int_{-R_{\max}}^{0} \min \left\{ (w^-_\star)^\prime (0), (w^-_\star)^\prime (1)\right \} \left( g_\star (r) - g(r) \right) \right| \\ 
    & = \min \left\{ (w^-_\star)^\prime (0), (w^-_\star)^\prime (1)\right \} \left| \int_{-R_{\max}}^{0}  \left( g_\star (r) - g(r) \right) \right|
\end{align*}
Now, recall the definition of $g_\star(r)$ and $g(r)$, then we have the following 
$$\left| \int_{-R_{\max}}^{0}  \left( g_\star (r) - g(r) \right) dr \right| = \left| \E_{\cR \sim \P_\pi}\left[ u^-_\star (\cR) - u^- (\cR) \right] \right|$$ 
Now, let us denote the intersection of $u^-(R)$ and $y=R+C_{bs}$ as $R=-R_{bs}$. We can say if the blackswan happens, then its reward is bounded between $[-R_{\max},-R_{bs}]$. Then we have the following, 
\begin{align*}
\left| \int_{-R_{\max}}^{0}  \left( g_\star (r) - g(r) \right) \right| &= \left| \E_{\cR \sim \P_\pi}\left[ u^-_\star (\cR) - u^- (\cR) \right] \right| \\ 
&= \bigg| \E_{\cR \sim \P_\pi}\left[ \boldsymbol{1} \left[ \cR < -R_{bs}\right] \left( u^-_\star (\cR) - u^- (\cR) \right) \right] \\
&\quad - \E_{\cR \sim \P_\pi}\left[ \boldsymbol{1} \left[ \cR \geq -R_{bs}\right] \left( - u^-_\star (\cR) + u^- (\cR) \right) \right]   \bigg| \\ 
&\geq \underbrace{\bigg| \E_{\cR \sim \P_\pi}\left[ \boldsymbol{1} \left[ \cR < -R_{bs}\right] \left( u^-_\star (\cR) - u^- (\cR) \right) \right] \bigg|}_{\text{Term I-1}} \\
&\quad - \underbrace{\bigg| \E_{\cR \sim \P_\pi}\left[ \boldsymbol{1} \left[ \cR \geq -R_{bs}\right] \left( - u^-_\star (\cR) + u^- (\cR) \right) \right]   \bigg|}_{\text{Term I-2}} \\ 
&\geq \bigg| \E_{\cR \sim \P_\pi}\left[ \boldsymbol{1} \left[ \cR < -R_{bs}\right] \left( u^-_\star (\cR) - u^- (\cR) \right) \right] \bigg|
\end{align*}

To lower bound the Term I-1, let's denote the minimum reachability of blackswan events as $\epsilon^{\min}_{bs} \neq 0$. Then we have
\begin{align}
    \text{Term I-1} &\geq \frac{R_{\max}-R_{bs}}{R_{\max}}\epsilon^{\min}_{bs} \min_{R \in [-R_{\max},-R_{bs}]}  \left| u^-(R) - u^-_\star(R) \right| \nonumber \\ 
    &\geq  \frac{R_{\max}-R_{bs}}{R_{\max}}\epsilon^{\min}_{bs} \left| u^-(-R_{bs}) - u_\star^-(-R_{bs}) \right|
\end{align}

\begin{align}
    \text{Term I-2} &\leq \frac{R_{bs}}{R_{\max}}\epsilon_{bs} \max_{R \in [-R_{bs},0]}  \left| u^-(R) - u^-_\star(R) \right| \nonumber \\ 
    &\leq \frac{R_{bs}}{R_{\max}}\epsilon_{bs} \left| u^-(-R_{bs}) - u_\star^-(-R_{bs}) \right|
\end{align}

Therefore, we have the following equation, 

$$\text{Term I} \geq \frac{\left( R_{\max }- R_{bs} \right) \epsilon^{\min}_{bs} -R_{bs} \epsilon_{bs}}{R_{\max}} \left| u^-(-R_{bs}) - u^-_\star(-R_{bs}) \right| $$
Also, since the function $u^-_\star(r)$ is convex, and $u^-_\star(-R_{\max}) < -R_{\max} + C_{bs}$ holds. Therefore, we could say $u^-_\star(r)  < \frac{R_{\max}-C_{bs}}{R_{\max}}r$. This leads us to come up with $u^-_\star(-R_bs) < \frac{R_{\max}-C_{bs}}{R_{\max}} (-R_{bs})$. Therefore, we have a gap lowerbound as 
\begin{align*}
    \left| u^-(-R_{bs}) - u_\star^-(-R_{bs}) \right| &\geq (R_{\max}-C_{bs})\frac{R_{bs}}{R_{\max}} - (R_{bs}-C_{bs})  \\ 
    &= \frac{(R_{\max}-R_{bs})C_{bs}}{R_{\max}}
\end{align*}
$$  $$
The above inequality could be minimized as 
\begin{align*}
    \text{Term I} &\geq \frac{\left( R_{\max }- R_{bs} \right) \epsilon^{\min}_{bs} -R_{bs} \epsilon_{bs}}{R_{\max}} \left(  \frac{(R_{\max}-R_{bs})C_{bs}}{R_{\max}}\right) \\
    &= \frac{ \left( \left( R_{\max }- R_{bs} \right) \epsilon^{\min}_{bs} -R_{bs} \epsilon_{bs} \right) (R_{\max}-R_{bs})C_{bs}}{R^2_{\max}}
\end{align*}

Now, let's upper bound Term 2. Before, recall that the definition of $g(r) = \P_r(u^-(\cR) > r))$ and note that by the definition of black swans, we have $u^-(\cR) > \cR + C_{bs}$ holds for $R \in [-R_{\max},-R_{bs})$. Therefore, we can say for all $r \in [-R_{\max},-R_{bs}), g(r)=1$ holds. Therefore, for all $r \in [-R_{\max},-R_{bs}]$, we have $w^-_\star(g(r)) - w^-(g(r)) = w^-_\star(1) - w^-(1) = 1-1 =0$
\begin{align}
    \left| \int_{-R_{\max}}^{0} w^-_\star (g(r)) - w^-(g(r)) dr \right| &=  \left| \int_{-R_{\max}+ C_{bs}}^{0} w^-_\star (g(r)) - w^-(g(r)) dr \right| \nonumber \\
    &=  \left| \int_{-R_{\max}+ C_{bs}}^{0}w^-(g(r)) - w^-_\star (g(r))  dr \right| \nonumber \\
    & \leq  \left| \int_{-R_{\max}+ C_{bs}}^{0}L^- g(r)  - g(r)  dr \right| \nonumber \\
    & =  (L^- - 1) \left| \int_{-R_{\max}+ C_{bs}}^{0}g(r)  dr \right| \nonumber \\
    & \leq (L^--1) \cdot \frac{R_{\max} - C_{bs}}{2R_{\max}}\epsilon_{bs}  \nonumber \\ 
    & =  (L^- - 1) \left| \int_{-R_{\max}+ C_{bs}}^{0} 1  -  \P_r(U^-(\cR)<r)  dr \right| \nonumber \\
    & =  (L^- - 1) \left| \int_{-R_{\max}+ C_{bs}}^{0} 1  -  \P_r(U^-(\cR)<r)  dr \right| \nonumber \\
    & =  (L^- - 1) \left|  \left( (R_{\max} - C_{bs}) - \int_{-R_{\max}+ C_{bs}}^{0}  \P_r(u^-(\cR)<r)  dr \right) \right| \nonumber \\
    & =  (L^- - 1) \left|  \left( (R_{\max} - C_{bs}) - \E_{\cR \sim \P_r} \left[ u^-(\cR) \boldsymbol{1}[-R_{\max} + C_{bs}<\cR < 0]\right] \right) \right| \label{app:eq_1}
\end{align}

Note that if $-R_{\max}+ C_{bs} < -R_{bs}$, then 
\begin{equation}
    \boldsymbol{1}[-R_{\max} + C_{bs}<\cR < 0] \cdot \E_{\cR \sim \P_r} \left[ u^-(\cR) \right] \geq  \left( \frac{R_{\max}-C_{bs}-R_{bs}}{2R_{\max}} \epsilon^{\min}_{bs} + \frac{R_{bs}}{2R_{\max}} \epsilon_{bs} \right) u^-(-R_{\max} + C_{bs})
    \label{app:eq_2}
\end{equation}

and if $-R_{\max}+ C_{bs} < -R_{bs}$, then 
\begin{equation}
    \boldsymbol{1}[-R_{\max} + C_{bs}<\cR < 0] \cdot \E_{\cR \sim \P_r} \left[ u^-(\cR) \right] \geq  \left( \frac{R_{\max}-C_{bs}}{2R_{\max}} \epsilon_{bs} \right) u^-(-R_{\max} + C_{bs})
    \label{app:eq_3}
\end{equation}

Therefore, combining the Equations \eqref{app:eq_1}, \eqref{app:eq_2}, \eqref{app:eq_3}, we conclude that 
$$\text{Term II} \leq C \cdot \frac{ \left( \left( R_{\max }- R_{bs} \right) \epsilon^{\min}_{bs} -R_{bs} \epsilon_{bs} \right) (R_{\max}-R_{bs})C_{bs}}{R^2_{\max}}$$
where $C \in [0,1]$ is a constant. This completes the proof.

\noindent
{\bf 3. Value function upper bound}

For the proof of Equation \eqref{eq:thm1_eq3} of Theorem \ref{thm1}, we utilized the following Lemma \ref{lemma:DKW} which provides a concentration inequality on the distance between empirical distribution and true distribution.

Since $u^+(\cR)$ is bounded above by $u^+(R_{\max})$ and $w^+(p)$ is Lipschitz with constant $L^+ (=(w^+)^\prime(a))$, we have the following inequality,
\begin{align*}
&\left|\int^\infty_0 w^+(P(u^+(X))>x) dx- \int^\infty_0 w^+(1- {\hat F_t}^+(x)) dx\right|
\\
= & \left|\int_0^{u^+(R_{\max})} w^+(P(u^+(X))>x) dx- \int_0^{u^+(R_{\max})} w^+(1- {\hat F_t}^+(x)) dx\right|
\\
\leq&
\left|\int_0^{u^+(R_{\max})} L^+\cdot |P(u^+(X)<x)-{\hat F_t}^+(x)| dx\right|\\
\leq&
L^+ u^+(R_{\max})\sup_{x\in \mathbb{R}} \left|P(u^+(X)<x)-{\hat F_t}^+(x)\right|.
\end{align*}
Now, plugging in the DKW inequality, we obtain
\begin{align}
&
P\left(\left|\int^\infty_0 w^+(P(u^+(X))>x) dx- \int^\infty_0 w^+(1- {\hat F_t}^+(x)) dx\right|>\epsilon/2\right)
\nonumber\\
&
\leq
 P\left(L^+ u^+(R_{\max})\sup_{x\in \mathbb{R}} \left|(P(u^+(X)<x)-{\hat F_t}^+(x)\right|>\epsilon/2\right) \leq 2 e^{-t \frac{\epsilon^2}{2 (L^+ u^+(R_{\max}))^2}}.\label{eq:dkw1}
\end{align}

Along similar manner, we have 
\begin{equation}
P\left(\left|\int^\infty_0 w^-(P(u^-(X))>x) dx- \int^\infty_0 w^-(1- {\hat F_t}^-(x)) dx\right|>\epsilon/2\right)
 \leq 2 e^{-t \frac{\epsilon^2}{2 (L^- u^-(-R_{\max}))^2)}}.\label{eq:dkw2}
\end{equation}

Combining \eqref{eq:dkw1} and \eqref{eq:dkw2}, we obtain
\begin{align*}
P(|V_{\widehat{\cM}^\dagger} -V_{\cM^\dagger}|>\epsilon) 
&\le P\left(\left|\int^\infty_0 w^+(P(u^+(X))>x) dx- \int^\infty_0 w^+(1- {\hat F_t}^+(x)) dx\right|>\epsilon/2\right) \\
&+ 
P\left(\left|\int^\infty_0 w^-(P(u^-(X))>x) dx- \int^\infty_0 w^-(1- {\hat F_t}^-(x)) dx\right|>\epsilon/2\right)\\
&\le 4 e^{-t \frac{\epsilon^2}{2 c^2}}.
\end{align*} 
where $c = \max \{ |L^+ u^+(R_{\max})|, |L^- u^-(-R_{\max})|\}$

\qed
\end{proof}

\begin{proof}[\textbf{Proof of Theorem \ref{thm2}}]
For a given optimal policy $\pi_\star$, define the normalized occupancy measure as $d_{\pi_\star} = (1-\gamma) \sum_{t=0}^\infty \gamma^t \mathbb{P}_\pi((s_t,a_t) = (s,a))$. Note that $d_{\pi_\star}$ represents the stationary distribution. Additionally, given the assumption that the reward function $R:\mathcal{S} \times \mathcal{A} \rightarrow \mathbb{R}$ is a bijection, it follows that the distribution $d_{\pi_\star}(R^{-1}(s,a))$ and $\mathbb{P}_r$ are identical. This indicates that the occurrence of black swan events can be entirely characterized by the reward values, rather than the specific state-action pairs.

Now, we define the event $E_{bs} := \{ \cR \in [-R_{\max},-R_{bs}]\}$ where $\cR \sim \P_r$. The probability of event $E_{bs}$ happens is bounded as follows 
\begin{align*}
    \P(E_{bs}) &= F(-R_{bs}) - F(-R_{\max})  \\
    &= F(-R_{bs})  \\ 
    & \in \left( \Big( \frac{R_{\max}-R_{bs}}{2R_{\max}}\Big) \epsilon^{\min}_{bs}, \Big( \frac{R_{\max}-R_{bs}}{2R_{\max}} \Big) \epsilon^{\max}_{bs} \right) \\ 
    &:= [p^{\min}_{bs},p^{\max}_{bs}]
\end{align*}
Note that we have assumed the $0 < \P_r(r=R(s,a)) < \epsilon_{bs}$ and its minimum reachable probability as $\epsilon^{\min}_{bs}$ for all reward. now, for given trajectory, the reward instance is given as $(r_1,r_2,...r_h,...)$ where $r_h \sim \P_r$, the probability that the agent first visit the black swan event at step $h$ would be defined as 
\begin{align*}
    \P \left( r_1,\cdots,r_{h-1} \notin E_{bs}, r_h \in E_{bs} \right) &= (1-\P(E_{bs}))^{h-1}\P(E_{bs}) \\ 
    &\leq (1-p_{\min})^{h-1} p_{\max} \\ 
\end{align*}
Therefore, its probability is bounded as follows, 
$$(1-p_{\max})^{h-1} p_{\min} \leq \P \left( r_1,\cdots,r_{h-1} \notin E_{bs}, r_h \in E_{bs} \right) \leq (1-p_{\min})^{h-1} p_{\max} $$
Now, to ensure that the blackswan probability to be lower bounded than $\delta$, 
we need the following conditions, 
\begin{align*}
    \delta &\leq (1-p_{\max})^{h-1} p_{\min}  \\ 
    \log{ \delta} &\leq (h-1) \log{(1-p_{\max})}  + \log{ p_{\min}}
\end{align*}
Therefore, we have 
$$h \geq \log{\left( \delta / p_{\min} \right)} / \log(1-p_{max}) +1.$$

Therefore, we can conclude that if $h = \Omega (\log{\left( \delta / p_{\min} \right)} / \log(1-p_{max}))$, then the agent's probability to meet the black swan is at least $\delta$.

\qed
\end{proof}

\section{Helpful Lemmas}

\begin{lemma}{\textbf{\textit{(Dvoretzky-Kiefer-Wolfowitz (DKW) inequality)}}}\\
Let ${\hat F_n}(u)=\frac{1}{n} \sum_{i=1}^n 1_{((u(X_i)) \leq u)}$ denote the empirical distribution of a r.v. $U$, with $u(X_1),\ldots,u(X_n)$ being sampled from the r.v $u(X)$.
The, for any $n$ and $\epsilon>0$, we have
$$
P(\sup_{x\in \mathbb{R}}|\hat{F_n}(x)-F(x)|>\epsilon ) \leq 2 e^{-2n\epsilon^2}.
$$
\label{lemma:DKW}
\end{lemma}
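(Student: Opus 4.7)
The statement is the classical Dvoretzky–Kiefer–Wolfowitz inequality with Massart's sharp constant, so the plan is to (i) reduce the supremum to a finite maximum using the piecewise-constant structure of the empirical CDF, (ii) obtain a pointwise concentration bound via Hoeffding's inequality, and (iii) upgrade the pointwise bound to a uniform one with constant $2$ (rather than the loose $O(n)$ factor a naive union bound would give). Throughout, I will work with $Y_i := u(X_i)$ treated as i.i.d.\ copies of $u(X) \sim F$, so $\hat F_n(x)$ is the empirical CDF of the $Y_i$.

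First I would record the pointwise bound: for each fixed $x$, the indicators $\mathbf{1}_{\{Y_i \leq x\}}$ are i.i.d.\ Bernoulli with mean $F(x)$ and range $[0,1]$, so Hoeffding's inequality yields
\begin{equation*}
P\bigl(|\hat F_n(x) - F(x)| > \epsilon\bigr) \leq 2 e^{-2n\epsilon^2}.
\end{equation*}
Next I would observe that $\hat F_n - F$ is right-continuous with jumps only at the order statistics $Y_{(1)} \leq \cdots \leq Y_{(n)}$ (and possibly at points where $F$ has atoms), and $F$ is non-decreasing, so the supremum $\sup_x |\hat F_n(x) - F(x)|$ is attained (in limit) at one of at most $2n$ candidate locations depending only on the data. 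In particular the sup is a measurable random variable and only the values of $\hat F_n$ and $F$ at the order statistics matter.

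The hard part is closing the gap from pointwise Hoeffding to the uniform bound with constant $2$. A direct union bound over the $2n$ jump locations gives $4n \, e^{-2n\epsilon^2}$, which is weaker than the claimed $2 e^{-2n\epsilon^2}$. To remove the $n$ factor I would follow Massart's approach: reduce to the case where $F$ is continuous (by the quantile transform $Y_i \mapsto F(Y_i)$, which makes the $Y_i$ uniform on $[0,1]$ and leaves the sup distribution unchanged), then analyze the one-sided deviation $D_n^+ := \sup_x (\hat F_n(x) - F(x))$ via a martingale argument on the reversed order statistics, producing the sharp tail $P(D_n^+ > \epsilon) \leq e^{-2n\epsilon^2}$; the two-sided bound follows by symmetry and a union over the two signs, yielding the factor $2$.

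The main obstacle is step (iii): obtaining the constant $2$ rather than a dimension-dependent prefactor. For the paper's downstream use in Theorem~\ref{thm1}, a non-sharp version (e.g.\ via a VC-dimension bound, noting that the class of half-lines $\{(-\infty, x]\}$ has VC-dimension $1$) would already suffice to drive the estimation gap between $V^\pi_{\widehat{\cM}^\dagger}$ and $V^\pi_{\cM^\dagger}$ to zero as $T \to \infty$, so if Massart's martingale argument is deemed outside the paper's scope one can cite the sharp constant and give the VC/Hoeffding-plus-union-bound proof of the weaker form as a self-contained fallback.
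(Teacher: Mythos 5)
The paper does not prove this lemma at all: it is stated in the ``Helpful Lemmas'' appendix as a classical result and invoked as a black box in the proof of Theorem~\ref{thm1}, so there is no in-paper argument to compare yours against. Your outline is a faithful account of the standard proof of the DKW inequality with Massart's constant --- pointwise Hoeffding, reduction of the supremum to the order statistics, and Massart's one-sided argument to replace the naive $O(n)$ union-bound prefactor with the sharp factor $2$ --- and you correctly identify that the only genuinely hard step is the last one. Be aware, though, that as written your step (iii) is a citation rather than a proof: you name Massart's reversed-order-statistics/martingale argument without carrying it out, so your text establishes only the weaker $4n\,e^{-2n\epsilon^2}$ bound self-containedly. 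That is perfectly adequate here --- as you note, the paper's only use of the lemma is to drive the estimation gap in Equation~\eqref{eq:dkw1} and~\eqref{eq:dkw2} to zero, for which any sub-exponential tail with a polynomial prefactor suffices, and the paper itself treats the sharp constant as citable background. If you want the displayed constant $2$ to be literally justified, either cite Massart (1990) explicitly or restate the lemma with the prefactor your self-contained argument actually delivers.
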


\end{document}